\def\eqref#1{equation~\ref{#1}}
\def\1{\bm{1}}
\DeclareMathAlphabet{\mathsfit}{\encodingdefault}{\sfdefault}{m}{sl}
\SetMathAlphabet{\mathsfit}{bold}{\encodingdefault}{\sfdefault}{bx}{n}
\title{Understanding Fine-tuning in Approximate \\Unlearning: A Theoretical Perspective}
\author{\name Meng Ding  \\
\addr Department of Computer Science and Engineering\\
State University of New York at Buffalo
\AND
\name Rohan Sharma  \\
\addr Department of Computer Science and Engineering\\
State University of New York at Buffalo
\AND
\name Changyou Chen \\
\addr Department of Computer Science and Engineering\\
State University of New York at Buffalo
\AND
\name Jinhui Xu \\
\addr School of Information Science and Technology\\
University of Science and Technology of China
\AND
\name Kaiyi Ji \\
\addr Department of Computer Science and Engineering\\
State University of New York at Buffalo
}
\newtheorem{theorem}{Theorem}[section]
\newtheorem{corollary}[theorem]{Corollary}
\theoremstyle{definition}
\newtheorem{assumption}[theorem]{Assumption}
\theoremstyle{definition}
\theoremstyle{definition}
\newtheorem{property}{Property}
\begin{document}

\maketitle

\begin{abstract}
Machine Unlearning has emerged as a significant area of research, focusing on `removing' specific subsets of data from a trained model. Fine-tuning (FT) methods have become one of the fundamental approaches for approximating unlearning, as they effectively retain model performance. However, it is consistently observed that naive FT methods struggle to forget the targeted data. 
In this paper, we present a theoretical analysis—the first to characterize FT unlearning behavior in linear models, providing a deeper exploration of this phenomenon. 
Our analysis reveals that while FT models can achieve zero remaining loss, they fail to forget the forgetting data, as the pretrained model retains its influence and the fine-tuning process does not adequately mitigate it.
To address this, we propose a novel Retention-Based Masking (RBM) strategy that constructs a weight saliency map based on the remaining dataset, unlike existing methods that focus on the forgetting dataset. Our theoretical analysis demonstrates that RBM not only significantly improves unlearning accuracy (UA) but also ensures higher retaining accuracy (RA) by preserving overlapping features shared between the forgetting and remaining datasets.
Experiments on synthetic and real-world datasets validate our theoretical insights, showing that RBM outperforms existing masking approaches in balancing UA, RA, and disparity metrics. 
\end{abstract}

\section{Introduction}
Machine Unlearning has emerged as a prominent area that focuses on protecting individual privacy during the model training process, particularly adhering to legislation such as {`the right to be forgotten'} \citep{rosen2011right} under the General Data Protection Regulation (GDPR) \citep{hoofnagle2019european}. That is, it removes certain training samples from the trained model upon their users' data deletion request. A natural approach to machine unlearning is to retrain the model from scratch, excluding the data that needs to be forgotten; this is known as exact unlearning. However, this method is highly computationally inefficient. To address this challenge, previous research has proposed a more relaxed definition of machine unlearning, where the unlearned model only needs to be approximately similar to one retrained from scratch. This led to the development of \emph{approximate unlearning} methods, such as Fine-Tuning \citep{warnecke2021machine, golatkar2020eternal}, Gradient Ascent \citep{graves2021amnesiac,thudi2022unrolling}, Fisher Forgetting \citep{becker2022evaluating,golatkar2020eternal}, and Influence Unlearning\citep{izzo2021approximate}. 
Fine-tuning, as one of the most widely used approaches in approximate unlearning, has demonstrated its empirical effectiveness. However, it can be observed in many studies \citep{kurmanji2024towards, warnecke2021machine, golatkar2020eternal, liu2024model,sharma2024discriminative} and our investigations in \Cref{tab:motivation} that while fine-tuning may maintain the utility of the model on remaining data, it struggles to forget the targeted data. This raises a natural question:
$$\textit{Why does fine-tuning fail to unlearn the forgetting data?}$$
To answer this question, we revisit the machine unlearning problem with a simple yet fundamental over-parameterized linear regression model and explore the behavior of fine-tuning through a theoretical perspective.
Our main contributions can be summarized as follows.

\begin{table*}[!t] 
\setlength{\fboxsep}{0pt}
\centering
\scriptsize 
\setlength\arrayrulewidth{0.5pt}
\caption{Cifar-10 Class-wise Forgetting Performance Comparing Retrain and {Naive FT (Fine-Tuning) Method}. The table compares Retrain and FT on CIFAR-10 across multiple evaluation metrics: Unlearning Accuracy (UA), Retaining Accuracy (RA), MIA-Efficacy, Test Accuracy (TA), Avg. Disparity, and Run-Time. Values in brackets indicate the gap between FT and the golden model (i.e., Retrain). Further explanations are provided in \cref{sec:experiment}. }
\resizebox{\textwidth}{!}{%
{\begin{tabular}{ccccccc}
\toprule
 & \multicolumn{6}{c}{Cifar-10 Class-wise Forgetting} \\ 
 {\textbf{Methods}} &   {\textbf{UA}} &  {\textbf{RA}} &    {\textbf{MIA-Efficacy}} &   {\textbf{TA}}  & \textbf{Avg. Disparity}&  {\textbf{Run Time}} \\ 
\midrule
\multicolumn{1}{c|}{Retrain} & $100.00_{\pm0.00}$ & $100.00_{\pm0.00}$ & $100.00_{\pm0.00}$ & $94.81_{\pm0.09}$ & $0.00$ & $82.00$ \\
\multicolumn{1}{c|}{FT} & $8.36_{\pm3.03}({\color{red}{91.64}})$ & $40.76_{\pm8.03}(59.24)$ & $99.92_{\pm0.03}(0.08)$ & $94.41_{\pm0.29}(0.40)$ & $37.84$ & $2.53$ \\
\bottomrule
\end{tabular}}
}
\label{tab:motivation}
\end{table*}

\begin{itemize}
    \item \textbf{Theoretical Understanding of Fine-Tuning.} We provide a theoretical analysis—the first to characterize FT unlearning behavior in linear models. Specifically, $\mathbf{1)}$ Based on the assumption of distinct features (\cref{asm:ortho_w}), our theoretical observations, which align with empirical studies, show that the remaining loss for the fine-tuning model is zero, matching that of the golden model. Moreover, the loss of the fine-tuning model on the forgetting dataset consistently remains zero, diverging from the performance of the golden model. $\mathbf{2)}$ We extend our analysis to a more complex case when the dataset retained for model retraining shares overlapping features with the forgetting dataset. This challenges assumptions of distinct feature sets across datasets, yet the previous conclusions remain valid in this case. More discussion refers to \cref{sec:fail/suc}.
    \item \textbf{Understanding the Benefits of Masking in Fine-Tuning.} Our analysis shows that naive fine-tuning (FT) methods fail to unlearn the forgetting data because the pretrained model retains information about this data, and the fine-tuning process does not effectively alter that retention. {To address this, we propose removing the forgetting component to mitigate its retention in the pre-trained model, an approach that aligns with the masking concept proposed in existing work \cite{fan2023salun}, which directly masks the forgetting data. However, one critical case is omitted: when the remaining data and forgetting data share similar features, it becomes unclear whether those shared features should be preserved. In \cref{sec:masked}, our work proves: $\mathbf{1)}$ Masking on the pretrained model can significantly improve unlearning accuracy while preserving the retaining accuracy. $\mathbf{2)}$ When considering overlapping features, retaining them does not substantially affect unlearning accuracy, but discarding them compromises the retaining accuracy.
    }
   { \item \textbf{Retention-Based Masking.} Building on the aforementioned analysis, we propose a novel Retention-Based Masking (RBM) strategy that constructs the weight saliency map based on the remaining dataset instead of the forgetting dataset. }
    
    To validate our theoretical results, we conduct experiments on both synthetic and real-world datasets. First, all mask-based methods significantly improve UA compared to the naive FT method. Additionally, RBM preserves overlapping features by constructing masks based on the remaining dataset, achieving higher RA than forgetting-based methods. Furthermore, RBM consistently achieves lower average disparity, effectively balancing unlearning and retaining objectives.
\end{itemize}

\subsection{Related Work}
\textbf{Machine Unlearning Methods.}
\cite{cao2015towards} first defined {``Unlearning''} as the removal of a sample that produces the same output on the dataset as if the sample had never been trained. The natural way to solve the problem is to retrain a model from scratch in response to each data deletion request. However, retraining is not feasible due to the limited time and constrained resources. \cite{ginart2019making} provided a relaxed definition inspired by Differential Privacy \citep{dwork2014algorithmic}, which only requires the unlearned model to produce results similar to those of retrain-from-scratch models. This led to the development of {``approximate unlearning''} methods, offering more efficient computational designs for machine unlearning.
\cite{guo2019certified,izzo2021approximate,neel2021descent,ullah2021machine,sekhari2021remember} provide theoretical error guarantees by focusing on the empirical risk minimization problem under this probabilistic notion of unlearning. \cite{golatkar2020eternal} proposed an information-based procedure to remove knowledge from the trained weights, without access to the original training data. Further, \cite{golatkar2020forgetting} approximated the weights inspired by NTK theory, addressing situations where the Hessian is not informative about where the model will converge into a null space. \cite{mehta2022deep} avoid the computation of Hessian by introducing a method only computing conditional independence, which identifies the Markov Blanket of parameters requiring updates. \cite{thudi2022unrolling} proposed a regularizer to reduce the ‘verification error,’ which represents the distance between the unlearned model and a retrained-from-scratch model. \cite{kurmanji2024towards} bears a novel teacher-student formulation to achieve better performance towards unbounded unlearning problems. \cite{liu2024model} considers model sparsity by pruning weights before the unlearning process, thereby introducing a new unlearning paradigm. \cite{shen2024label} incorporates the variational inference and contrastive learning approaches to address the lack of supervision information (label-agnostic). \cite{torkzadehmahani2024improved} leverages memorization dynamics to design localized masks that better preserve task-relevant representations, while \cite{foster2024fast} computes a parameter-importance ratio between the forget and retain sets using the Fisher information matrix. In contrast, our method derives the masking strategy directly from the remaining dataset, guided by theoretical insights into feature overlap and retention.

\textbf{Machine Unlearning Theory.}
For approximate unlearning, \cite{neel2021descent,thudi2022unrolling} explored algorithms for empirical risk minimization objectives, while \cite{sekhari2021remember} studied population risk minimization problems, providing theoretical guarantees on both the effectiveness of unlearning and the privacy of the data subjects. \cite{guo2019certified,zhang2022prompt} provided the certified radius 
with respect to data changes before and after removals, as well as the certified budget for data removals. For exact unlearning, \cite{ullah2021machine} introduced the notion of algorithmic stability, called Total Variation (TV) stability, which is suited for achieving exact unlearning. This concept was further extended to the federated setting by \cite{che2023fast,tao2024communication}. \cite{chien2024langevin} introduced Langevin Unlearning, which interprets noisy gradient descent as an implicit Bayesian sampling mechanism that promotes forgetting through stochastic noise injection. However, existing theoretical work has primarily focused on utility guarantees or privacy-driven dynamics, with limited analysis explaining the successes and failures of fine-tuning methods. 

\textbf{Notations}: In this paper, we adhere to a consistent notation style for clarity. We use boldface lower letters such as $\mathbf{x}, \mathbf{w}$ for vectors, and boldface capital letters (e.g. $\mathbf{A}, \mathbf{H}$) for matrices. Let $\|\mathbf{A}\|_2$ denote the spectral norm of $\mathbf{A}$ and $\|\mathbf{v}\|_2$ denote the Euclidean norm of $\mathbf{v}$. For two vectors $\mathbf{u}$ and $\mathbf{v}$, their inner product is denoted by $\langle\mathbf{u}, \mathbf{v}\rangle$ or $\mathbf{u}^{\top} \mathbf{v}$. For two matrices $\mathbf{A}$ and $\mathbf{B}$ of appropriate dimension, their inner product is defined as $\langle\mathbf{A}, \mathbf{B}\rangle:=\operatorname{tr} (\mathbf{A}^{\top} \mathbf{B} )$. For a positive semi-definite (PSD) matrix $\mathbf{A}$ and a vector $\mathbf{v}$ of appropriate dimension, we write $\|\mathbf{v}\|_{\mathbf{A}}^2:=\mathbf{v}^{\top} \mathbf{A v}$. Denote by $\mathbf{P}_m$ the projection onto the space of a matrix $\mathbf{X}_m$, i.e., $\mathbf{P}_m=\mathbf{X}_m (\mathbf{X}_m^{\top} \mathbf{X}_m)^{-1}\mathbf{X}_m^{\top}$. 

\section{Machine Unlearning in Linear Models}

Let $D=\{(\mathbf{x}_{i}, y_{i})\}_{i=1}^n$ be a training dataset consisting of $n$ data points, where $\mathbf{x}_{i}$ represents the feature vector, and $y_{i}$ is the response variable for each data point in the dataset $D $. Assume that each pair $(\mathbf{x}_{i}, y_{i})$ is a realization of the linear regression model: $y =\mathbf{x}^{\top} \mathbf{w}_*$, with $\mathbf{w}_* \in \mathbb{R}^d$ being the optimal model parameter in the {overparameterized regime (${n}\ll{d}$)}. Machine Unlearning aims to remove (or scrub) the influence of specific training data from a trained machine learning (ML) model. Let $D_{f}=\{(\mathbf{x}_{i}, y_{i})\}_{i=1}^{n_f} \subseteq D$ represents a subset whose influence we want to scrub, termed the forgetting dataset. Accordingly, the complement of $D_f$, termed the remaining dataset, is $D_r=\{(\mathbf{x}_{i}, y_{i})\}_{i=n_f +1}^{n} =D \backslash D_f$. The forgetting data can be represented by stacking the feature vectors and response variables as follows:
\begin{align*}
    &\mathbf{X}_f:= [ \mathbf{x}_{1}, \mathbf{x}_{ 2} ,\ldots, \mathbf{x}_{n_f} ] \in \mathbb{R}^{d \times n_f},  \mathbf{y}_f := [y_{1}, y_{2}, \ldots, y_{n_f} ]^{\top} \in \mathbb{R}^{n_f \times 1} 
\end{align*}
Similarly, it also holds for the remaining data: 
\begin{align*}
    &\mathbf{X}_r:= [ \mathbf{x}_{n_f+1}, \mathbf{x}_{n_f+2},\ldots, \mathbf{x}_{n}] \in \mathbb{R}^{d \times (n-n_f)}, \mathbf{y}^r:= [y_{n_f+1}, y_{n_f+2}, \ldots, y_{n}]^{\top} \in \mathbb{R}^{(n-n_f) \times 1}
\end{align*}
The overall dataset $\mathbf{X}$ and $\mathbf{y}$ are composed separately by concatenating $\mathbf{X}_r, \mathbf{X}_f$ and $\mathbf{y}_r, \mathbf{y}_f$.



\textbf{Learning Procedure}
We consider the machine unlearning problem based on the fine-tuning method dividing the learning process into two distinct phases: Original Training and Fine-tuning (Unlearning). During the original training phase, we train a model on $n$ data points $\mathbf{X} \in \mathbb{R}^{d \times n}$ and obtain an original model $\mathbf{w}_o$ by optimizing $L(\mathbf{w}_o, D)$, where $L(\mathbf{w},D)$ is defined as the mean-squared-error (MSE) loss: $L(\mathbf{w},D) \triangleq \frac{1}{n}\|\mathbf{X}^{\top} \mathbf{w}-\mathbf{y}\|_2^2$.
For the fine-tuning (unlearning) phase, we initialize with the original parameter $\mathbf{w}_o$ and proceed to retrain the model specifically on a subset of the remaining dataset $D_t \subseteq D_r$ by optimizing $L(\mathbf{w}_t, D_t)$, where $\mathbf{w}_t$ is the unlearn model by fine-tuning.

Since we work in the overparameterized regime, where $n \textless d$, each $\mathbf{w}$ can perfectly fit the dataset. We can express each solution $\mathbf{w}$ to the following optimization problems for Original training (OT), Unlearn via fine-tuning (FT) and Golden unlearning (GU) respectively:
\begin{align}
    \text{OT:}\quad \mathbf{w}_o &=\underset{\mathbf{w}}{\operatorname{argmin}}{\|\mathbf{w}\|_2}, \quad \text { s.t. } \mathbf{y} = \mathbf{X}^{\top} \mathbf{w} \label{eq:original_training} \\
    \text{FT:} \quad \mathbf{w}_t &=\underset{\mathbf{w}}{\operatorname{argmin}}{\|\mathbf{w} -\mathbf{w}_o\|_2},  \quad \text { s.t. } \mathbf{y}_t = \mathbf{X}_t^{\top} \mathbf{w} \label{eq:unlearn_fine-tuning} \\ 
    \text{GU:} \quad \mathbf{w}_g &=\underset{\mathbf{w}}{\operatorname{argmin}}{\|\mathbf{w}\|_2},  \text { s.t. } \mathbf{y}_r = \mathbf{X}_r^{\top} \label{eq:train_from_scratch} \mathbf{w} 
\end{align}
Our goal is to evaluate how the fine-tuning solution $\mathbf{w}_t$ differs from the golden model solution $\mathbf{w}_g$ which refers to retraining the model parameters from scratch over the remaining dataset $D_r$. Existing work has assessed machine unlearning performance from various perspectives \citep{graves2021amnesiac,becker2022evaluating,golatkar2020eternal,song2019privacy}. {In this paper, we focus particularly on the Unlearning Loss (UL) and Remaining Loss (RL), which refers to the model performance on the forgetting and remaining dataset respectively. These losses are defined as:
\begin{align*}
    &\text{RL:}\quad L(\mathbf{w}, D_r)=\frac{1}{n_r}\|\mathbf{X}_r^{\top} \mathbf{w}-\mathbf{y}_r\|_2^2, \quad \text{UL:}\quad L(\mathbf{w}, D_f)=\frac{1}{n_f}\|\mathbf{X}_f^{\top} \mathbf{w}-\mathbf{y}_f\|_2^2.
    \vspace{-0.2in}
\end{align*}}
\section{Naive Fine-Tuning Methods Fail To Unlearn}
\label{sec:fail/suc}
In empirical studies \citep{kurmanji2024towards,warnecke2021machine,golatkar2020eternal} and \cref{tab:motivation}, it can be observed that fine-tuning may retain the utility of a model but struggles to forget. In this section, we revisit this phenomenon in a simplified setting, aiming to gain a basic understanding of why the vanilla fine-tuning method succeeds in retaining the model's utility on the remaining dataset but fails to forget the targeted data it was trained on.

\subsection{Distinct Features}
To simplify our analysis, we first consider distinct features with the following assumption:
\begin{assumption}\label{asm:ortho_w}
    The datasets $\mathbf{X}_f$ and $\mathbf{X}_r$ possess distinct non-zero features, which can be denoted as $\mathbf{X}_r^{\top}=[\mathbf{R}^{\top}, \mathbf{0}] $ and $\mathbf{X}_f^{\top}=[\mathbf{0},  \mathbf{F}^{\top}]  $, where $\mathbf{R} \subseteq \mathbb{R}^{d_r \times {(n-n_f)}}$ and $\mathbf{F} \subseteq \mathbb{R}^{d_f \times n_f} $ correspond to the non-zero parts, $d_r$ and $d_f$ are the distinct feature numbers for remaining and forgetting data, respectively, and it satisfied that $d_r+d_f=d$.
\end{assumption}
    The assumption implies that each of these datasets contains features that are unique to each dataset--there is no overlap in the features present in $\mathbf{X}_f$ and $\mathbf{X}_r$. 
    {The construction of $\mathbf{X}_f$ and $\mathbf{X}_r$ can be achieved by rearranging the samples in the matrix to group non-zero features into distinct blocks. Without loss of generality, we assume a structure where each matrix contains only one zero block to clear the analysis.}
    It can be obtained immediately from \cref{asm:ortho_w} that $\mathbf{w}_* = \mathbf{w}_*^f + \mathbf{w}_*^r $, where $\mathbf{w}_*^f $ and $ \mathbf{w}_*^r$ are the optimal solution such that $\mathbf{y}^f = \mathbf{X}_f^{\top}\mathbf{w}_*^{f}$ and $\mathbf{y}^r = \mathbf{X}_r^{\top}\mathbf{w}_*^{r}$.
    In an ideal scenario for classification tasks, each class possesses its own unique set of features that distinctly differentiates it from other classes. We later extended our analysis to overlapping features in \cref{sec:overlapped}.

\begin{theorem}\label{thm:no_ov}
    Suppose a model is trained by the procedure \ref{eq:unlearn_fine-tuning} and \ref{eq:train_from_scratch} separately. Under the \cref{asm:ortho_w}, it holds that
    \begin{itemize}
        \item RL: $L(\mathbf{w}_t, D_r)=0 $, UL
        : $L(\mathbf{w}_t, D_f)=0 $;
        \item RL: $L(\mathbf{w}_g, D_r)=0$, UL: $L(\mathbf{w}_g, D_f)=\| \mathbf{w}_*^f\|^2_{  \frac{\mathbf{X}_f\mathbf{X}_f^{\top}}{n_f}} $.
    \end{itemize}
    Here, $\mathbf{w}_t$ refers to the unlearned model via fine-tuning, $\mathbf{w}_g$ refers to the model parameter retrained from scratch, RL and UL refer to the remaining loss on the remaining data and the unlearning loss on the forgetting data. 
\end{theorem}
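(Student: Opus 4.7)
The plan is to exploit the block-diagonal structure induced by \Cref{asm:ortho_w} to get closed-form expressions for $\mathbf{w}_o$, $\mathbf{w}_t$, and $\mathbf{w}_g$, and then read off the four loss values directly. Writing coordinates in the order (remaining features, forgetting features), Assumption 1 gives
\[
\mathbf{X}=\begin{pmatrix}\mathbf{R} & \mathbf{0}\\ \mathbf{0} & \mathbf{F}\end{pmatrix},\qquad \mathbf{y}=\begin{pmatrix}\mathbf{y}_r\\ \mathbf{y}_f\end{pmatrix},\qquad \mathbf{w}_*=\begin{pmatrix}\mathbf{w}_*^{r,(1)}\\ \mathbf{w}_*^{f,(2)}\end{pmatrix},
\]
so that $\mathbf{X}^\top \mathbf{X}$ is block diagonal with blocks $\mathbf{R}^\top \mathbf{R}$ and $\mathbf{F}^\top \mathbf{F}$, both invertible in the overparameterized regime once we assume the feature blocks have full column rank.

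Next I would write down the three min-norm solutions from \eqref{eq:original_training}--\eqref{eq:train_from_scratch} via the standard formula $\mathbf{X}_\bullet(\mathbf{X}_\bullet^\top \mathbf{X}_\bullet)^{-1}\mathbf{y}_\bullet$. Plugging in the block form yields
\[
\mathbf{w}_o=\begin{pmatrix}\mathbf{R}(\mathbf{R}^\top \mathbf{R})^{-1}\mathbf{y}_r\\ \mathbf{F}(\mathbf{F}^\top \mathbf{F})^{-1}\mathbf{y}_f\end{pmatrix},\qquad \mathbf{w}_g=\begin{pmatrix}\mathbf{R}(\mathbf{R}^\top \mathbf{R})^{-1}\mathbf{y}_r\\ \mathbf{0}\end{pmatrix}.
\]
The critical observation for the fine-tuning step is that the constraint $\mathbf{X}_r^\top \mathbf{w}=\mathbf{y}_r$ only acts on the first block of $\mathbf{w}$, and that $\mathbf{w}_o$ already satisfies this constraint. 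Hence the min-distance projection in \eqref{eq:unlearn_fine-tuning} leaves both blocks of $\mathbf{w}_o$ unchanged, giving $\mathbf{w}_t=\mathbf{w}_o$. This is the conceptual heart of the theorem: fine-tuning cannot touch the "forgetting block" because the remaining-data constraint is orthogonal to it.

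With the parameters in hand I would compute the four losses. For $\mathbf{w}_t$: $\mathbf{X}_r^\top \mathbf{w}_t=\mathbf{R}^\top \mathbf{R}(\mathbf{R}^\top \mathbf{R})^{-1}\mathbf{y}_r=\mathbf{y}_r$ and $\mathbf{X}_f^\top \mathbf{w}_t=\mathbf{F}^\top \mathbf{F}(\mathbf{F}^\top \mathbf{F})^{-1}\mathbf{y}_f=\mathbf{y}_f$, giving $L(\mathbf{w}_t,D_r)=L(\mathbf{w}_t,D_f)=0$. For $\mathbf{w}_g$: the same computation gives $L(\mathbf{w}_g,D_r)=0$, while $\mathbf{X}_f^\top \mathbf{w}_g=\mathbf{0}$ implies
\[
L(\mathbf{w}_g,D_f)=\frac{1}{n_f}\|\mathbf{y}_f\|_2^2=\frac{1}{n_f}\,\mathbf{w}_*^{f\,\top}\mathbf{X}_f\mathbf{X}_f^\top \mathbf{w}_*^f=\|\mathbf{w}_*^f\|^2_{\frac{1}{n_f}\mathbf{X}_f\mathbf{X}_f^\top},
\]
using the data-generating identity $\mathbf{y}_f=\mathbf{X}_f^\top \mathbf{w}_*^f$ from the remark after Assumption 1.

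There is no serious analytic obstacle; the whole argument is a block-matrix calculation. The main thing to be careful about is the bookkeeping: one must verify that $\mathbf{R}^\top \mathbf{R}$ and $\mathbf{F}^\top \mathbf{F}$ are invertible (which follows from the overparameterized assumption $n<d$ together with non-degeneracy of the non-zero blocks), justify using the min-norm formula $\mathbf{X}_\bullet(\mathbf{X}_\bullet^\top \mathbf{X}_\bullet)^{-1}\mathbf{y}_\bullet$ in each of the three problems, and confirm that the feasibility set of \eqref{eq:unlearn_fine-tuning} is nonempty so that $\mathbf{w}_o$ is indeed the minimizer. Once those are in place, the theorem drops out of the block decomposition.
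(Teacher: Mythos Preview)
Your approach is correct and a bit more direct than the paper's. Both exploit the block structure of \Cref{asm:ortho_w}, but the paper routes everything through projection operators: it writes $\mathbf{w}_t=(\mathbf{I}-\mathbf{P}_t)\mathbf{w}_o+\mathbf{P}_t\mathbf{w}_*^r$ and $\mathbf{w}_g=\mathbf{P}_r\mathbf{w}_*^r$, and then evaluates each loss by invoking identities such as $\mathbf{X}_f^\top\mathbf{P}_t=\mathbf{0}$ and $\mathbf{X}_f^\top\mathbf{P}=\mathbf{X}_f^\top$. You instead write down the explicit block forms of $\mathbf{w}_o$ and $\mathbf{w}_g$ and observe that $\mathbf{w}_o$ is already feasible for the fine-tuning constraint, so $\mathbf{w}_t=\mathbf{w}_o$ immediately. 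That observation is the cleaner route here, and in fact does not even need the block structure: $\mathbf{w}_o$ satisfies \emph{all} the original equations $\mathbf{X}^\top\mathbf{w}_o=\mathbf{y}$, hence any subset of them, so it is the unique minimizer of $\|\mathbf{w}-\mathbf{w}_o\|$ over the feasible set. The paper's projection machinery is heavier for this statement, but it is set up deliberately so that the same identities carry over verbatim to the overlapping-feature theorem and the regularized-model theorem.

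One small bookkeeping fix: the fine-tuning constraint in \eqref{eq:unlearn_fine-tuning} is $\mathbf{X}_t^\top\mathbf{w}=\mathbf{y}_t$ with $D_t\subseteq D_r$ a \emph{subset}, not all of $D_r$ as you wrote. Your argument goes through unchanged (indeed even more trivially) once you replace $\mathbf{X}_r$ by $\mathbf{X}_t$, since $\mathbf{w}_o$ is feasible for any subconstraint of the original system.
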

\cref{thm:no_ov} presents two interesting observations: 1) The fine-tuning model can perform perfectly on the remaining dataset, which indicates that the information from training data has been preserved from the original model, $\mathbf{w}_o$, to the unlearned model via fine-tuning, $\mathbf{w}_t$.
2) The loss of the fine-tuning model on the forgetting dataset consistently remains zero, which diverges from the performance of the golden model. This suggests that the fine-tuning model is unable to forget the information it previously acquired from $\mathbf{w}_o$, which may be contradicted by catastrophic forgetting in continual learning \citep{parisi2019continual,ding2024understanding}.

To illustrate the behavior of fine-tuning during the unlearning process more clearly, we consider the projective nature of learning. Firstly, the solution of \cref{eq:unlearn_fine-tuning} can be represented as 
\begin{equation}\label{eq:unlearned_model}
    \mathbf{w}_t = (\mathbf{I} - \mathbf{P}_t)\mathbf{w}_o +\mathbf{P}_t \mathbf{w}_*^r, 
\end{equation}
where $\mathbf{P}_t$ is the projection space of $\mathbf{X}_t$, the $\mathbf{I}-\mathbf{P}_t$ is the corresponding orthogonal space, and the $\mathbf{w}_o$ can be also represented $\mathbf{w}_o = \mathbf{P}\mathbf{w}_*$ with $\mathbf{P}$ being the projection space of $\mathbf{X}$.
According to the property of projection \cref{pro:proj_matrix}, multiplying any data matrix by a projection matrix preserves the components of the data that lie within the subspace defined by the projection. Moreover, under the distinct features assumption \ref{asm:ortho_w}, it holds that 
\begin{equation}\label{eq:unlearn_model_distinct}
    \mathbf{w}_t = \mathbf{P} \mathbf{w}_*^r +(\mathbf{P} - \mathbf{P}_t)\mathbf{w}_*^f.
\end{equation}
Therefore, the unlearned model $\mathbf{w}_t$ from \cref{eq:unlearn_fine-tuning} decomposed into two components for the unlearning process: the first part, $\mathbf{w}_*^r$, preserves the accuracy on the remaining data, while the second part, $\mathbf{w}_*^f$, also ensures accuracy on the forget data. However, the projection of $\mathbf{w}_*^f$ onto the fine-tuning space $\mathbf{P}_t$ has no effect, ultimately \textbf{resulting in the unlearned model $\mathbf{w}_t$ being exactly the same as the pretrained model $\mathbf{w}_o$}. {The proof of \cref{thm:no_ov} is provided in \cref{sec:proof_of_noov}.}

\begin{figure*}[h]
{    
    \begin{subfigure}{0.245\textwidth}
        \centering
        \includegraphics[width=\linewidth]{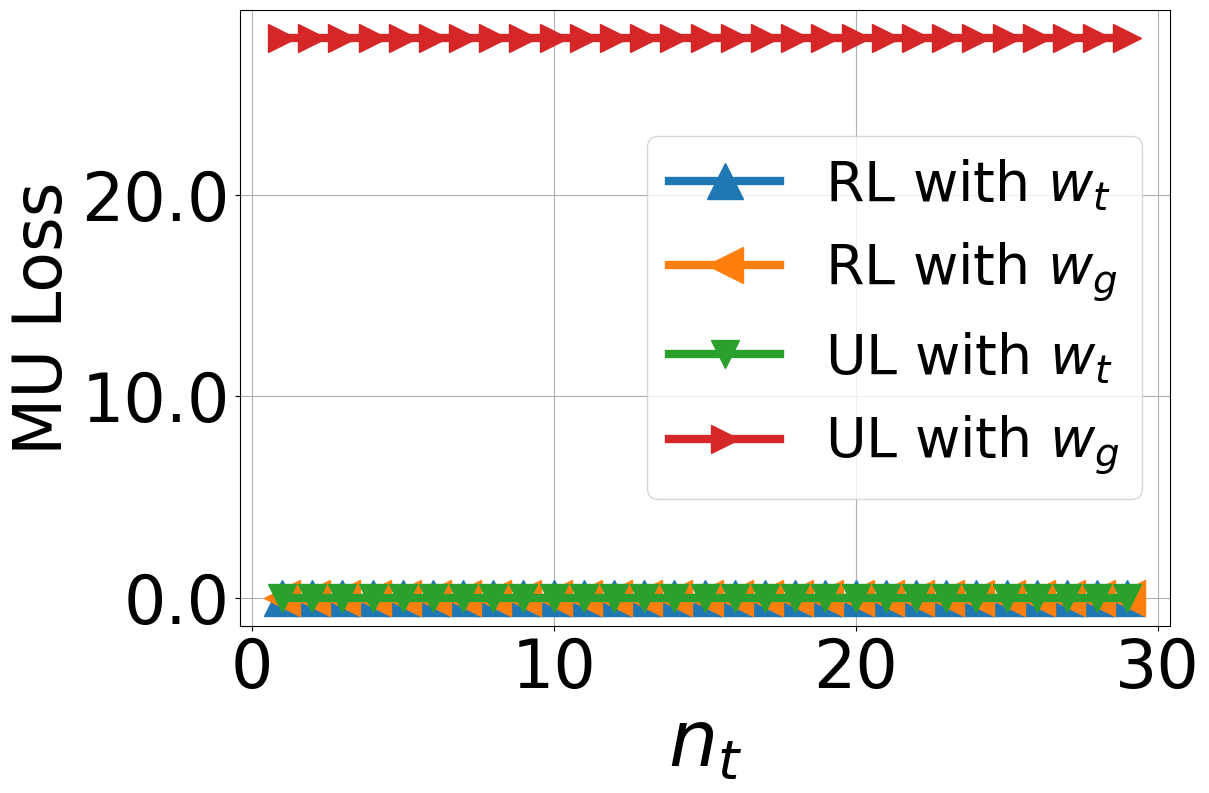}
        {\caption{}}
        \label{fig:noregu_noverlap}
    \end{subfigure}%
    \begin{subfigure}{0.245\textwidth}
        \centering
        \includegraphics[width=\linewidth]{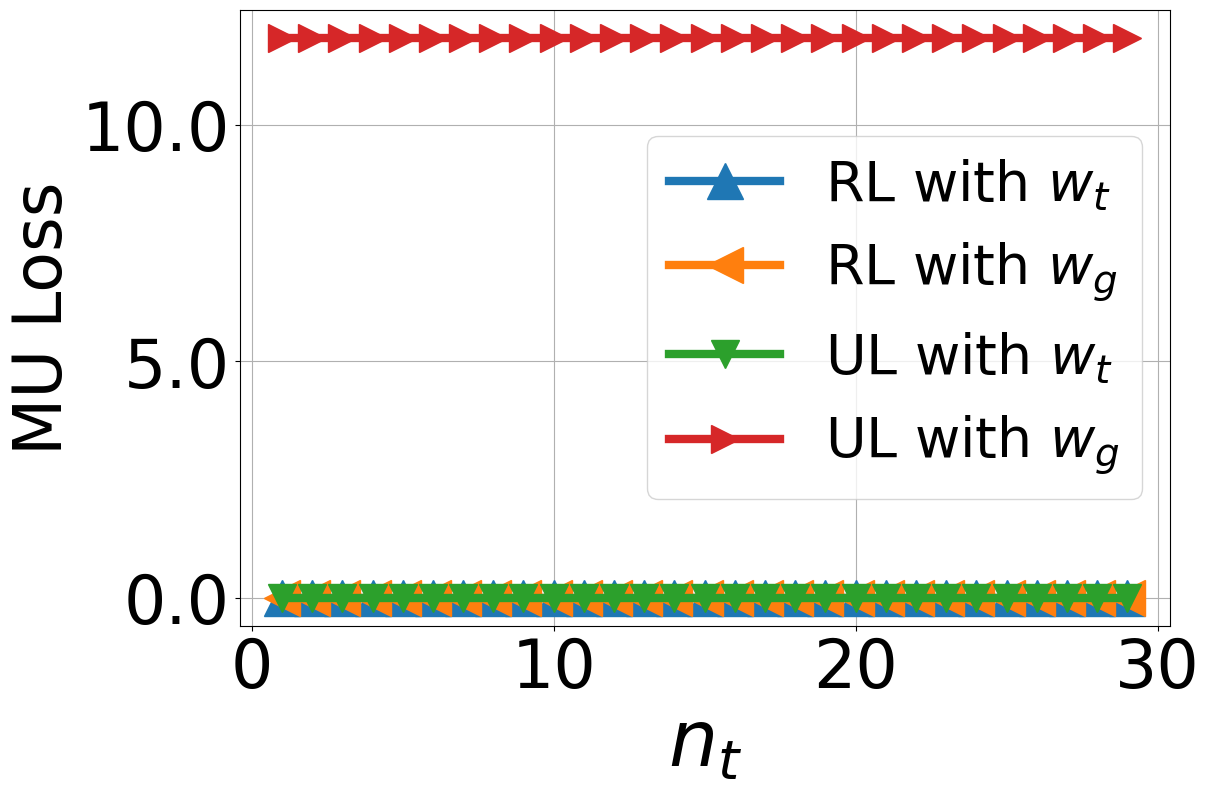}
        \caption{}
        \label{fig:noregu_overlap}
    \end{subfigure}
    \begin{subfigure}{0.245\textwidth}
        \centering
        \includegraphics[width=\linewidth]{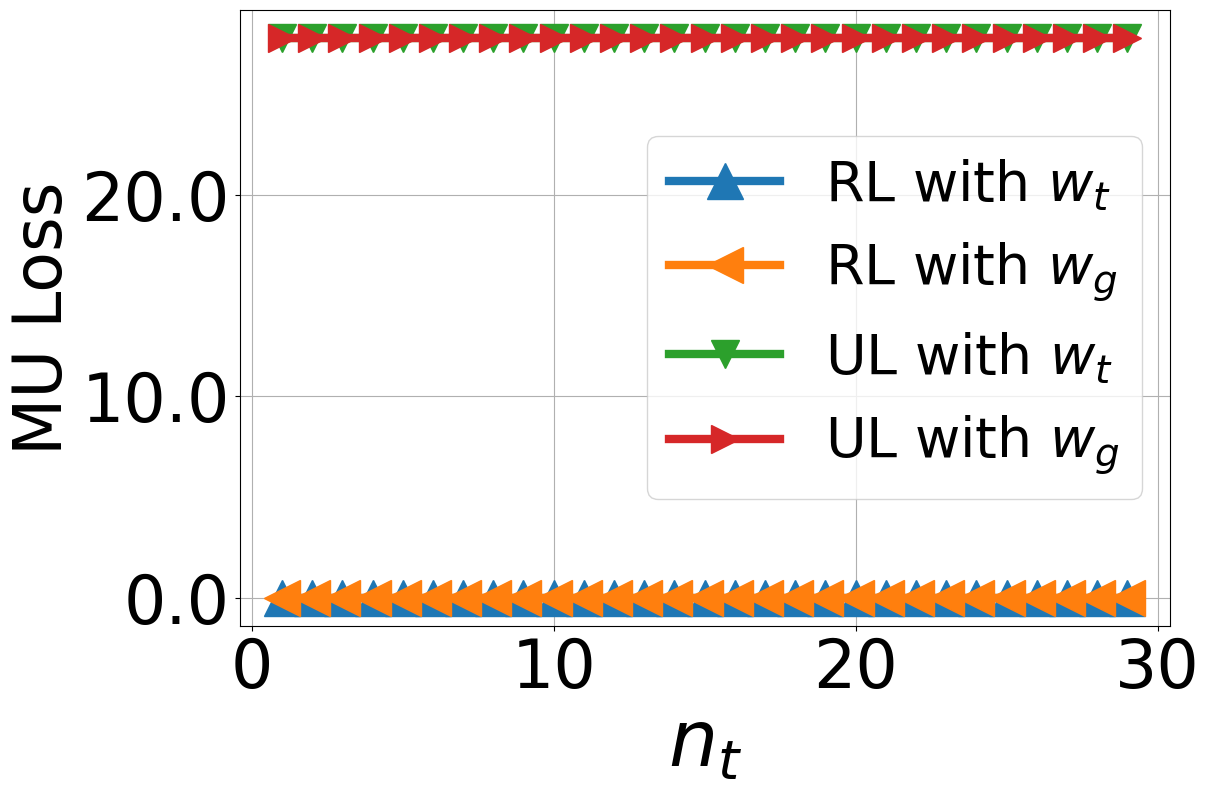}
        \caption{}
        \label{fig:regu_noverlap}
    \end{subfigure}%
    \begin{subfigure}{0.245\textwidth}
        \centering
        \includegraphics[width=\linewidth]{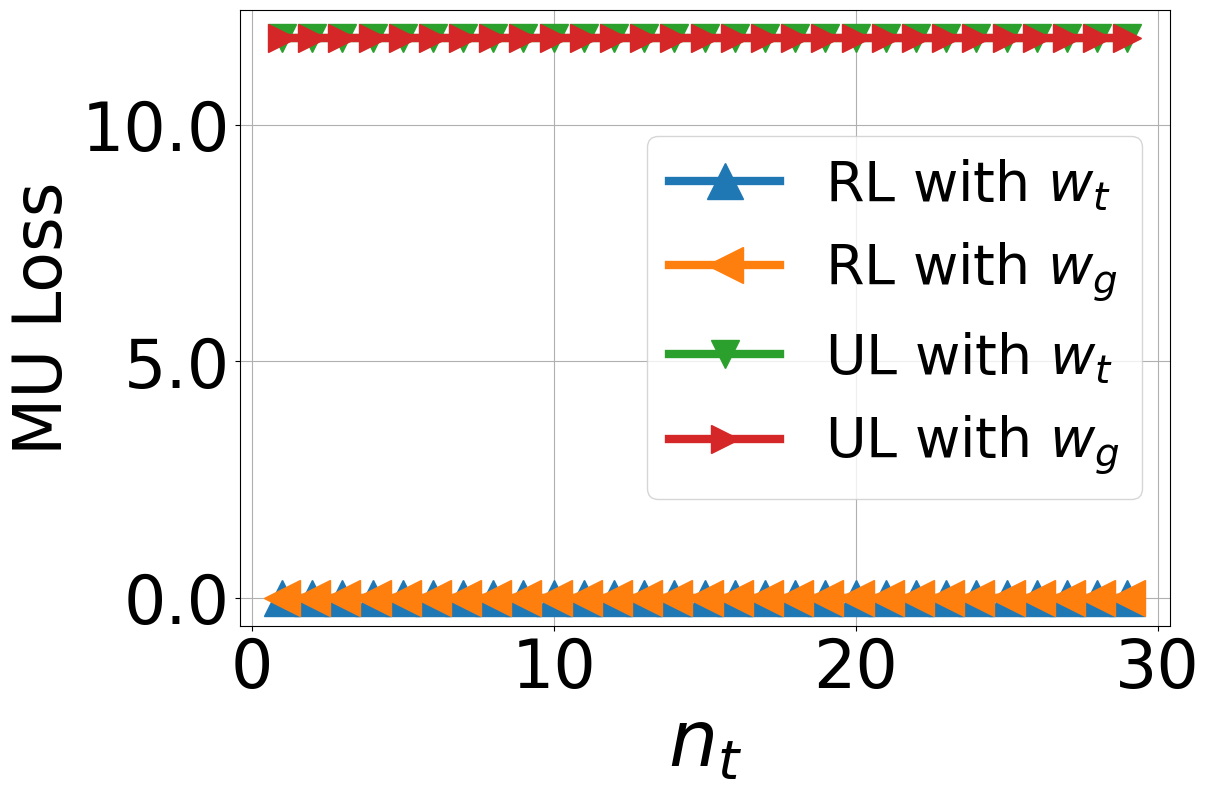}
        \caption{}
        \label{fig:regu_overlap}
    \end{subfigure}}
\caption{{Machine Unlearning Performance via (Masked) Fine-tuning with (without) Overlapping Features. \cref{fig:noregu_noverlap} and \cref{fig:noregu_overlap} present the relationship between machine unlearning loss (i.e. RA, UA) and the number of fine-tuning data samples under distinct features and overlapping features assumptions, using naive FT method. In contrast, \cref{fig:regu_noverlap} and \cref{fig:regu_overlap} show the same relationship using masked fine-tuning methods, as discussed in \cref{sec:masked}.}
}
\vspace{-0.1in}
\label{fig:mu_lr}
\end{figure*}

\subsection{Overlapping Features}

\label{sec:overlapped}
In practical scenarios, training datasets often deviate from ideal classifications, introducing complexities such as overlapping features between subsets. This challenges assumptions of distinct feature sets across datasets.
Therefore, we extend our previous analysis to address the presence of overlapped features. In the following, we begin by defining overlapped features.

\begin{assumption}\label{asm:overlap}
    The datasets $\mathbf{X}_f$ and $\mathbf{X}_r$ possess $d_r$ overlapped features, which can be structured as follows: $\mathbf{X}_r^{\top}=[\mathbf{R}^{\top},  \mathbf{L}_1^{\top}, \mathbf{0}] $ and $\mathbf{X}_f^{\top}=[\mathbf{0},  \mathbf{L}_2^{\top}, \mathbf{F}^{\top}] $, where $\mathbf{R} \subseteq \mathbb{R}^{d_r \times n_r}$ and $\mathbf{F} \subseteq \mathbb{R}^{d_f \times n_f} $ represent the distinct features of the remaining and forgetting data, respectively.  $\mathbf{L}_1 \subseteq \mathbb{R}^{d_{lap} \times n_r} $ and $\mathbf{L}_2 \subseteq \mathbb{R}^{d_{lap} \times n_f} $ denote the overlapped parts.
\end{assumption}
     Similarly to \cref{asm:ortho_w}, $d_r$ and $d_f$ are the distinct feature numbers for remaining and forgetting data, while the equation $d_r+ d_{lap} + d_f=d$ holds. It can also indicates that the optimal solution can be decomposed into $\mathbf{w}_* = \mathbf{w}_*^f + \mathbf{w}_*^{lap} +\mathbf{w}_*^r $ such that $\mathbf{y}^f = \mathbf{X}_f^{\top}(\mathbf{w}_*^{f}+ \mathbf{w}_*^{lap})$ and $\mathbf{y}^r = \mathbf{X}_r^{\top}(\mathbf{w}_*^{r}+ \mathbf{w}_*^{lap})$. 

\begin{theorem}\label{thm:overlap}
    Suppose a model is trained by the procedure \ref{eq:unlearn_fine-tuning} and \ref{eq:train_from_scratch} separately. Under the \cref{asm:overlap}, it holds that
    \begin{itemize}
        \item RL: $L(\mathbf{w}_t, D_r)=0$, UL: $L(\mathbf{w}_t, D_f)=0 $;
        \item RL: $L(\mathbf{w}_g, D_r)=0$, UL: $L(\mathbf{w}_g, D_f)=\|\mathbf{P}_r\mathbf{w}_*^r + \mathbf{P}_r\mathbf{w}_*^{lap} -(\mathbf{w}_*^f + \mathbf{w}_*^{lap})\|_{\frac{1}{n_f}\mathbf{X}_f\mathbf{X}_f^{\top}}^2 $.
    \end{itemize}
\end{theorem}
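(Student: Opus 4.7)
My plan is to exploit overparameterized interpolation: the original model $\mathbf{w}_o$ already interpolates every training point, so it is automatically feasible for the fine-tuning problem \eqref{eq:unlearn_fine-tuning}, which forces $\mathbf{w}_t = \mathbf{w}_o$ and trivializes the first bullet. The second bullet is then a direct min-norm computation for the golden model using the block structure induced by \cref{asm:overlap}.

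First I would observe that the constraint $\mathbf{X}^{\top}\mathbf{w}_o = \mathbf{y}$ in \eqref{eq:original_training} decomposes blockwise into $\mathbf{X}_r^{\top}\mathbf{w}_o = \mathbf{y}^r$ and $\mathbf{X}_f^{\top}\mathbf{w}_o = \mathbf{y}^f$. Taking $D_t = D_r$, the point $\mathbf{w} = \mathbf{w}_o$ is thus feasible for \eqref{eq:unlearn_fine-tuning} and achieves the minimum possible objective value $\|\mathbf{w}-\mathbf{w}_o\| = 0$; by uniqueness of the projection onto an affine subspace that already contains $\mathbf{w}_o$, we get $\mathbf{w}_t = \mathbf{w}_o$. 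Substitution then yields both $L(\mathbf{w}_t, D_r) = 0$ and $L(\mathbf{w}_t, D_f) = 0$. This argument does not even invoke \cref{asm:overlap}, consistent with the identical conclusion in \cref{thm:no_ov}.

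Next, for the golden model I would apply the standard min-norm closed form to \eqref{eq:train_from_scratch} to get $\mathbf{w}_g = \mathbf{X}_r(\mathbf{X}_r^{\top}\mathbf{X}_r)^{-1}\mathbf{y}^r$. Using $\mathbf{y}^r = \mathbf{X}_r^{\top}(\mathbf{w}_*^r + \mathbf{w}_*^{lap})$ from the remark after \cref{asm:overlap}, this becomes $\mathbf{w}_g = \mathbf{P}_r(\mathbf{w}_*^r + \mathbf{w}_*^{lap})$. Since $\mathbf{X}_r^{\top}\mathbf{P}_r = \mathbf{X}_r^{\top}$ by \cref{pro:proj_matrix}, the remaining loss $L(\mathbf{w}_g, D_r)$ vanishes immediately. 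For the unlearning loss I would form the residual
\[
\mathbf{X}_f^{\top}\mathbf{w}_g - \mathbf{y}^f \;=\; \mathbf{X}_f^{\top}\bigl[\mathbf{P}_r(\mathbf{w}_*^r + \mathbf{w}_*^{lap}) - (\mathbf{w}_*^f + \mathbf{w}_*^{lap})\bigr],
\]
and expand $\tfrac{1}{n_f}\|\cdot\|^2$ to recognize the quadratic form $\mathbf{v}^{\top}\bigl(\tfrac{1}{n_f}\mathbf{X}_f\mathbf{X}_f^{\top}\bigr)\mathbf{v}$ with $\mathbf{v} = \mathbf{P}_r\mathbf{w}_*^r + \mathbf{P}_r\mathbf{w}_*^{lap} - \mathbf{w}_*^f - \mathbf{w}_*^{lap}$, which is exactly the claimed expression.

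The main care point, rather than a deep obstacle, is bookkeeping the overlap. Unlike \cref{thm:no_ov}, where $\mathbf{X}_f^{\top}\mathbf{P}_r = 0$ made several cross terms vanish, here the overlap blocks $\mathbf{L}_1,\mathbf{L}_2$ give $\mathbf{X}_f^{\top}\mathbf{X}_r = \mathbf{L}_2^{\top}\mathbf{L}_1 \neq 0$ in general, so the residual on $D_f$ genuinely does not collapse. One must carefully track how $\mathbf{P}_r$ acts on each component of $\mathbf{w}_*$: $\mathbf{w}_*^f$ (supported on the $d_f$ block) is annihilated because $\mathrm{range}(\mathbf{X}_r)$ lies in the complementary coordinates, while $\mathbf{w}_*^r$ and $\mathbf{w}_*^{lap}$ are only partially recovered, producing the $\mathbf{P}_r\mathbf{w}_*^r + \mathbf{P}_r\mathbf{w}_*^{lap}$ terms. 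The remaining gap to $\mathbf{w}_*^f + \mathbf{w}_*^{lap}$ is the genuine forgetting signal that $\mathbf{w}_g$ cannot reconstruct, measured in the forgetting feature covariance $\tfrac{1}{n_f}\mathbf{X}_f\mathbf{X}_f^{\top}$.
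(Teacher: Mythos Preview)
Your proposal is correct, and for the golden-model bullet it matches the paper's argument essentially line by line. The interesting divergence is in the fine-tuned bullet.

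For $\mathbf{w}_t$, the paper expands the closed form $\mathbf{w}_t=(\mathbf{I}-\mathbf{P}_t)\mathbf{w}_o+\mathbf{P}_t(\mathbf{w}_*^r+\mathbf{w}_*^{lap})$ from \cref{pro:min_norm_solu}, then pushes $\mathbf{X}_r^{\top}$ and $\mathbf{X}_f^{\top}$ through this expression using the block-matrix projection identities of \cref{pro:proj_matrix2}, eventually reducing the UL residual to $\tfrac{1}{n_f}\|\mathbf{X}_f^{\top}\mathbf{P}_t\mathbf{w}_*^f\|^2$ and checking by an explicit block computation that this vanishes. Your route is shorter and more conceptual: since $\mathbf{w}_o$ already interpolates all of $D$, it is feasible for \eqref{eq:unlearn_fine-tuning}, so the projection onto the affine constraint set is $\mathbf{w}_o$ itself and $\mathbf{w}_t=\mathbf{w}_o$; both losses are then immediate. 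This bypasses \cref{pro:proj_matrix2} entirely and, as you note, does not even use \cref{asm:overlap}. The paper in fact states ``the unlearned model $\mathbf{w}_t$ being the same as the pretrained model $\mathbf{w}_o$'' in the discussion after \eqref{eq:unlearn_model_distinct}, but its formal proof of \cref{thm:overlap} does not exploit that shortcut. One small remark: you wrote ``Taking $D_t=D_r$'', but the theorem allows any $D_t\subseteq D_r$; your feasibility argument covers that generality verbatim, since $\mathbf{X}^{\top}\mathbf{w}_o=\mathbf{y}$ restricts to $\mathbf{X}_t^{\top}\mathbf{w}_o=\mathbf{y}_t$ for every subset.
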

\cref{thm:overlap} shows that the previous conclusions remain valid under the assumptions of overlapping features, as the information from all training data, including forget data, is preserved from the pretrained model, $\mathbf{w}_o$, to the unlearned model through fine-tuning, $\mathbf{w}_t$. Consequently, the loss on both the remaining dataset and the forgetting dataset for the fine-tuning model is zero. Additionally, an interesting observation is that the number of overlapping features does not impact the unlearning accuracy of the fine-tuning model. {The proof of \cref{thm:overlap} is provided in \cref{sec:proof_of_overlap}.}

Both \cref{thm:no_ov} and \cref{thm:overlap} present similar findings regarding the performance of the unlearned model through fine-tuning. We run a synthetic experiment to validate these results (more experimental details in \cref{sec:app_exp}).
In \cref{fig:noregu_noverlap} and \cref{fig:noregu_overlap}, both distinct and overlapping feature assumptions demonstrate the same results: 1) The remaining loss of fine-tuning model $\mathbf{w}_t$ and golden model $\mathbf{w}_g$ is zero,  indicating that the fine-tuning model performs equivalently to the golden model, successfully retaining the model's utility on the remaining dataset. 2) The unlearning loss of the fine-tuning model consistently remains at zero, differing from the golden model, suggesting that the fine-tuning model fails to forget the information obtained from the pretrained model. 
These empirical findings align well with our theoretical analysis.

\section{Eliminating Forgetting Data Features from Pre-Trained Model Enhances Unlearning}\label{sec:masked}
Compared to the golden model $\mathbf{w}_g = \mathbf{P}_r \mathbf{w}_*^r$, the unlearned model can be viewed as having an additional second term as $$\mathbf{w}_t = \mathbf{P} \mathbf{w}_*^r +(\mathbf{P} - \mathbf{P}_t)\mathbf{w}_*^f.$$ 
This additional term $(\mathbf{P}-\mathbf{P}_t) \mathbf{w}_*^f$ represents the residual influence of the data intended to be forgotten on the unlearned model, contributing to the unlearning accuracy (UA) gap between $\mathbf{w}_t$ and the golden model $\mathbf{w}_g$. A natural approach to mitigate this gap might involve making the fine-tuning space converge toward the pretraining space-that is, aligning $\mathbf{P}_t$ with $\mathbf{P}_r$. However, this strategy is inefficient and contradictory, as it would lead to the optimal solution for the fine-tuning dataset becoming identical to that of the entire dataset, undermining the purpose and benefits of fine-tuning.

Inspired by the formulation of the unlearned model: 
$$\mathbf{w}_t = (\mathbf{I} - \mathbf{P}_t)\mathbf{w}_o +\mathbf{P}_t \mathbf{w}_*^r.$$
To mitigate the UA gap between the fine-tuning model and the golden model, it becomes evident that the remaining portion of the pretrained model does not contribute to UA. Specifically, the components of the pretrained model $\mathbf{w}_o$ associated with the forgetting data $(\mathbf{w}_*^f)$ do not enhance performance on the remaining dataset $D_r$. Therefore, if we can eliminate the forgetting component—specifically by removing the $\mathbf{w}_*^f$ term from $\mathbf{w}_o$—the divergence can be addressed. 
In the following, we provide a formal description of this modification.
Consider the same learning procedure \cref{eq:original_training} to obtain the pretrained model $\mathbf{w}_o$. Prior to unlearning through fine-tuning, we modify $\mathbf{w}_o$ by removing components associated with the forgetting data. Specifically, we construct a modified model $\hat{\mathbf{w}}_o$ as follows:

 \textbf{1. Distinct Features Scenario.}
            When the features of $D_r$ and $D_f$ are distinct, we construct $\hat{\mathbf{w}}_o$ by retaining only the components corresponding to $D_r$ and setting the rest to zero. Formally, we define $\hat{\mathbf{w}}_o$ as
            $\hat{\mathbf{w}}_o^{\prime}[0:d_r] = \mathbf{w}_o[0:d_r]$
            or equivalently can be understood as:
            $$
            \hat{\mathbf{w}}_o[i] = \begin{cases}
                \mathbf{w}_o[i], & \text{if } i \in \text{features of } D_r, \\
                0, & \text{otherwise}.
            \end{cases}
            $$
\textbf{2. Overlapping Features Scenario.} When features overlap across $D_r$ and $D_f$, we consider two cases:
            \begin{itemize}[label=\scalebox{0.6}{$\bullet$}]
                \item \textbf{Option A} (Retaining Overlapping Features):
                    We retain the overlapping features between $D_r$ and $D_f$, which can be expressed as $\hat{\mathbf{w}}_o[0:d_r+d_{lap}] = \mathbf{w}_o[0:d_r+d_{lap}]$
                    or equivalently
                   $$
                    \hat{\mathbf{w}}_o[i] = \begin{cases}
                        \mathbf{w}_o[i], & \text{if } i \in \text{features of } \\ & D_r \cup \text{overlapping features}, \\
                        0, & \text{otherwise}.
                    \end{cases}
                    $$
                \item \textbf{Option B} (Discarding Overlapping Features):
                We discard the overlapping features, which can be expressed as 
                $\hat{\mathbf{w}}_o^{\prime}[0:d_r] = \mathbf{w}_o[0:d_r]$
                or equivalently
                $$
                \hat{\mathbf{w}}_o^{\prime}[i] = \begin{cases}
                    \mathbf{w}_o[i], & \text{if } i \in \text{features of } D_r, \\
                    0, & \text{otherwise}.
                \end{cases}
                $$
            \end{itemize}


\begin{theorem}\label{thm:masked}
    Let $\mathbf{w}_o$ be a pretrained model obtained the overall dataset $D=D_r \cup D_f$. Before performing unlearning (fine-tuning), we modify $\mathbf{w}_o$ to remove the components associated with $D_f$ as described above. Then, using the modified models $\hat{\mathbf{w}}_o$ ($\hat{\mathbf{w}}_o^{\prime}$) in the unlearning process yields
    \begin{itemize} 
        \item[\bf 1.] \textbf{In the distinct features scenario } (i.e., under the \cref{asm:ortho_w}), we have:\\ {RL:}
        $L(\hat{\mathbf{w}_t}, D_r)=0$; UL: $L(\hat{\mathbf{w}_t}, D_f)=\| \mathbf{w}_*^f\|^2_{\frac{\mathbf{X}_f\mathbf{X}_f^{\top}}{n_f}  } $,
        \item[\bf 2.] \textbf{In the overlapping features scenario} (i.e., under \cref{asm:overlap}), we have:
          \begin{list}{$\bullet$}{\topsep=0.1ex \leftmargin=0.15in \rightmargin=0.in \itemsep =-0.01in}
                \item  \textbf{Option A} (Retaining Overlapping Features):
                \\ {RL:} 
                $L(\hat{\mathbf{w}_t}, D_r)=0$; \\
                UL: $L(\hat{\mathbf{w}_t}, D_f)=\|\mathbf{P}\mathbf{w}_*^r + \mathbf{P}\mathbf{w}_*^{lap} -(\mathbf{w}_*^f + \mathbf{w}_*^{lap})\|_{\frac{1}{n_f}\mathbf{X}_f\mathbf{X}_f^{\top}}^2 $.
                \item \textbf{Option B} (Discarding Overlapping Features):\\
                {RL:} 
                $L(\hat{\mathbf{w}_t}^{\prime}, D_r)=\|(\mathbf{I}-\mathbf{P}_t)\mathbf{w}_*^{lap}\|_{\frac{1}{n_r}\mathbf{X}_r\mathbf{X}_r^{\top}}^2$\\
                UL: $L(\hat{\mathbf{w}_t}^{\prime}, D_f)=\|\mathbf{P}\mathbf{w}_*^r + \mathbf{P}_t \mathbf{w}_*^{lap} -(\mathbf{w}_*^f + \mathbf{w}_*^{lap})\|_{\frac{1}{n_f}\mathbf{X}_f\mathbf{X}_f^{\top}}^2. $
            \end{list}
    \end{itemize}
\end{theorem}

According to \cref{thm:masked}, under the distinct features assumption, the masked unlearned model achieves the same remaining and unlearning loss as the golden model. Furthermore, when considering overlapping features, if the overlapped component from the pretrained model is retained, the remaining loss remains zero, as with the golden model, while the unlearning loss differs only in the projection component. This difference can be considered negligible when applied to $\mathbf{w}_*^r$ and $\mathbf{w}_*^{lap}$ due to the model assumption. \cref{fig:regu_noverlap} and \cref{fig:regu_overlap} verify our theoretical conclusions. However, if the overlapped component is discarded from the pretrained model, the remaining loss is no longer zero, and there is a small change to the unlearning loss that can be overlooked. These findings offer several insights into the design of machine unlearning algorithms:
\textbf{1) Masking on the pretrained model can significantly improve unlearning accuracy while preserving the retaining accuracy}.  If we can identify the component of the pretrained model related to the forgetting data, applying masking to this component can further enhance UA. Our theorem also explains recent related works, such as \cite{liu2024model,fan2023salun}, which apply a mask to the pretrained model either randomly or by regularizing the weights associated with the forgetting data to provide better unlearn performance. These methods share the same underlying principle discussed here. However, one overlook a critical scenario: when the remaining data and forgetting data share similar features, it becomes unclear whether those shared features should be retained. As shown in \cref{thm:masked}, our work provides a clear resolution to this question. \textbf{2) When considering overlapping features, retaining them does not substantially affect unlearning accuracy, but discarding them compromises the retaining accuracy}. As shown in \cref{thm:masked}, the remaining loss can not retain zero unless the remaining data $\mathbf{X}_r$ can be fully represented by the fine-tuning space, meaning $\mathbf{P}_t \mathbf{X}_r = \mathbf{X}_r$. Additionally, as the number of overlapping features increases, the impact on both remaining and unlearning loss becomes more significant. Discarding too many overlapping components can lead to instability in the retaining accuracy, as the model loses essential information needed to represent $D_r$, which in turn causes the remaining loss to increase. 
\cref{fig:mu_lr} and \cref{fig:re_nore_com} validate our theoretical findings. {The proof of \cref{thm:masked} is provided in \cref{sec:proof_of_masked}.}

\begin{figure}
    \centering
    \begin{subfigure}{0.4\textwidth}
        \centering
        \includegraphics[width=\linewidth]{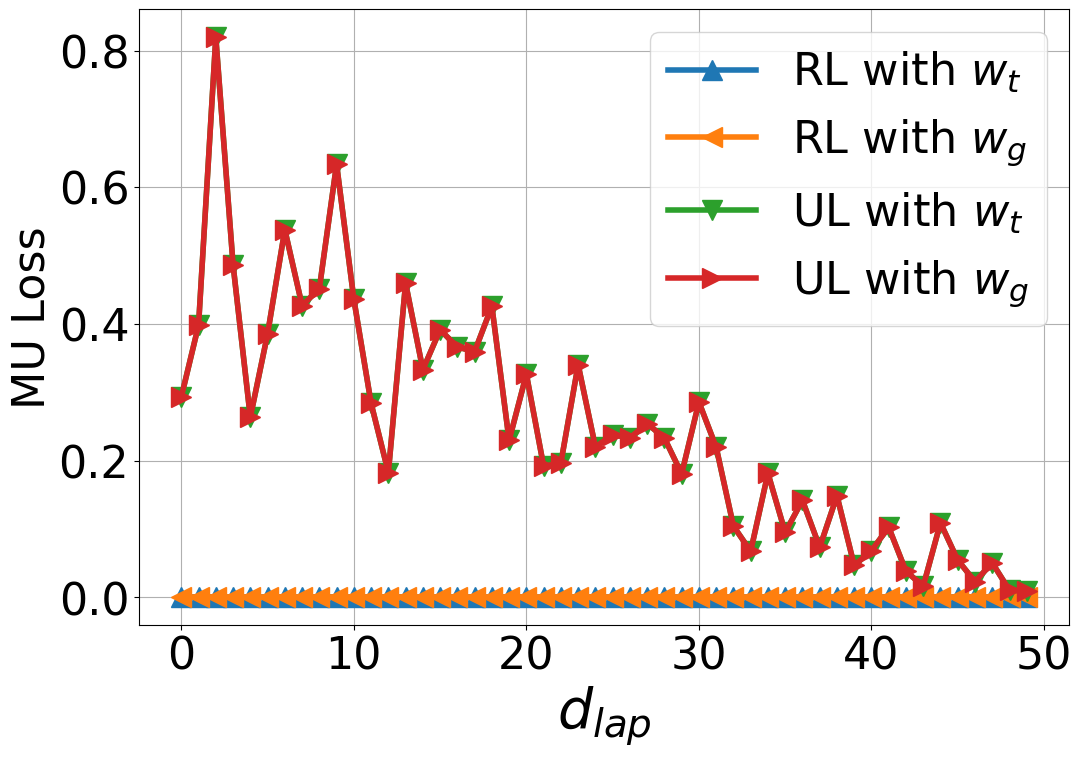}
        \caption{}
        \label{fig:regu_do1}
    \end{subfigure}
        \begin{subfigure}{0.4\textwidth}
        \centering
        \includegraphics[width=\linewidth]{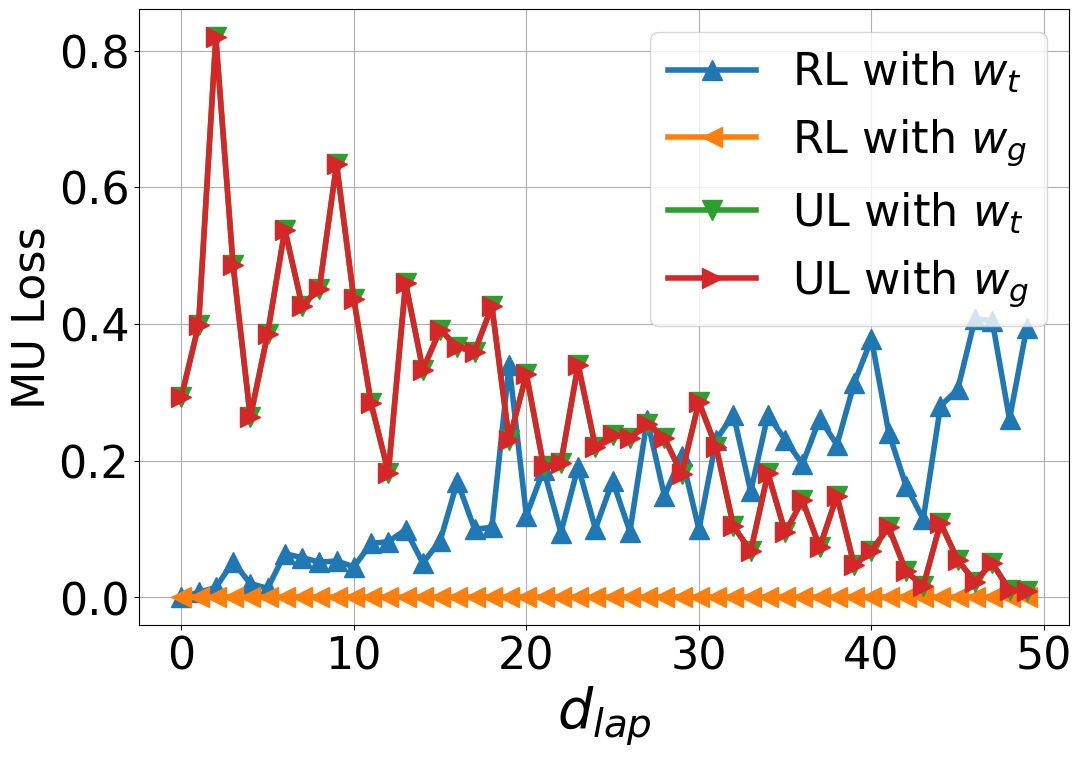}
        \caption{}
        \label{fig:regu_do2}
    \end{subfigure}
    \vspace{-0.3cm}
    \caption{{Comparison of Machine Unlearning Loss with and without Overlapping Features. \cref{fig:regu_do1} retains overlapping features from the pretrained model, showing the matching performance between masked $\mathbf{w}_t$ model and golden model $\mathbf{w}_g$; \cref{fig:regu_do2} discards the overlapping features, showing a decline in retaining accuracy.}}
    \label{fig:re_nore_com}
    \vspace{-0.2in}
\end{figure}

{\section{Rethinking Masking in Approximate Unlearning}}
\subsection{Discarding Overlapping Features May Harm Retaining Accuracy}

In \cref{sec:masked} we show that once the components of the pretrained model related to the forgetting data are identified and removed, unlearning accuracy can be significantly improved. In practice, an existing work \cite{fan2023salun} shares the same principles as discussed above. Specifically, it constructs the desired weight saliency map by leveraging the gradient of a loss function with respect to the model weights on the forgetting dataset:
\begin{equation}\label{eq:mask_f}
\mathbf{m}_{f}=\mathbbm{1}\left(\left|\nabla_{\mathbf{w}} L\left(\mathbf{w} ; {D}_{f}\right)\right|_{\mathbf{w}=\mathbf{w}_{{o}}} \mid \geq \gamma\right),
\end{equation}
where $\mathbbm{1}(g \geq \gamma)$ is an element-wise indicator function that outputs 1 for the $i$-th element if $g_i \geq \gamma$ and $0$ otherwise, $|\cdot|$ denotes the element-wise absolute value operation, and $\gamma>0$ is a hard threshold.
Variables with larger gradients, which are highly associated with the forgetting dataset, are identified and subsequently masked. Then, the unlearn model can be updated by:
\begin{equation}\label{eq:mask_f_wu}
\mathbf{w}_{u_f}=\underbrace{\mathbf{m}_{f} \odot(\Delta \mathbf{w}+\mathbf{w}_{o})}_{\text {salient weights }}+\underbrace{(\mathbf{1}-\mathbf{m}_{f}) \odot \mathbf{w}_{o}}_{\text {original weights }},
\end{equation}
where $\mathbf{w}_{o}$ represents the original model, $\mathbf{1}$ denotes an all-one vector and $\Delta \mathbf{w}$ is optimized by the following problem:
\begin{align}\label{eq:loss_combine}
    \underset{ \Delta \mathbf{w}}{\min }  \mathcal{L}(\mathbf{w}_{u_f}):=&\mathbb{E}_{(\mathbf{x}, y) \sim {D}_{f}, y^{\prime} \neq y}[\ell_{\mathrm{CE}}(\mathbf{w}_{u_f} ; \mathbf{x}, y^{\prime})]+\alpha \mathbb{E}_{(\mathbf{x}, y) \sim {D}_{r}}[\ell_{\mathrm{CE}}(\mathbf{w}_{u_f} ; \mathbf{x}, y)],
\end{align}
where $y^{\prime}$ is the random label of the image $\mathbf{x}$ different from $y$, $\alpha$ is a regularization parameter and $\ell_{\mathrm{CE}}$ is the cross-entropy (CE) loss for supervised classification.

However, \cref{eq:mask_f_wu} overlooks the scenario where the remaining data and the forgetting data share similar model weights, making it unclear whether those weights should be preserved.
In \cref{thm:masked}, we demonstrate that when considering overlapping features (features shared between the remaining dataset and the forgetting dataset), retaining them is more beneficial than discarding them, which contrasts slightly with the approach in \cite{fan2023salun}. Specifically, we show that discarding overlapping features compromises retaining accuracy while retaining them ensures better performance for the remaining dataset. 

Based on this analysis, we propose an alternative principle--Retention-Based Masking: the weight saliency map should be constructed based on the remaining dataset rather than the forgetting dataset:
\begin{equation}\label{eq:mask_r}
    \mathbf{1}-\mathbf{m}_{r}=\mathbf{1}- \mathbbm{1}\left(\left|\nabla_{\mathbf{w}} L\left(\mathbf{w} ; {D}_{r}\right)\right|_{\mathbf{w}=\mathbf{w}_{{o}}} \mid \geq \gamma\right).
\end{equation}
Therefore, the unlearn model will be updated by 
\begin{equation}\label{eq:mask_r_wu}
\mathbf{w}_{u_r}=\underbrace{(\mathbf{1}-\mathbf{m}_{r}) \odot(\Delta \mathbf{w}+\mathbf{w}_{o})}_{\text {salient weights }}+\underbrace{\mathbf{m}_{r} \odot \mathbf{w}_{o}}_{\text {original weights }}.
\end{equation}
Here, $\Delta \mathbf{w}$ is optimized using \cref{eq:loss_combine} with the retention-based mask, which combines the forgetting loss on the random labels of forgotten data with the retaining loss on the remaining data.

\begin{table*}[!ht]
\setlength{\fboxsep}{0pt}
\centering
\scriptsize 
\setlength\arrayrulewidth{0.5pt}
\caption{\textbf{Cifar-10, Cifar-100 Comparison.} We evaluate the unlearning performance on Cifar datasets using Dense models only. Evaluation metrics from Section \ref{sec:experiment} are employed. Metrics are reported as mean $\pm$ standard deviation across five seeds. (\textbf{UA:} Unlearning Acc., \textbf{RA:} Retaining Acc., \textbf{TA:} Test Acc.)}
\vspace{0.05in}
\resizebox{\textwidth}{!}{%
{\begin{tabular}{lccccccc}
\toprule
 & \multicolumn{6}{c}{Cifar-10 Random Forgetting} \\ 
{\textbf{Methods}} & \multicolumn{1}{c}{\textbf{UA}} & \multicolumn{1}{c}{\textbf{MIA-Efficacy}} & \multicolumn{1}{c}{\textbf{RA}} & \multicolumn{1}{c}{\textbf{TA}} & \multicolumn{1}{c}{\textbf{Avg. Disparity}} & {\textbf{Run Time (min)}} \\ 
\midrule
\multicolumn{1}{c|}{Retrain} & $5.80_{\pm0.12}$ & $13.91_{\pm0.15}$ & $100.00_{\pm0.00}$ & $94.30_{\pm0.13}$ & $0.00$ & $82.15$ \\
\multicolumn{1}{c|}{FT}      & $0.18_{\pm0.04}(5.62)$ & $1.70_{\pm0.10}(12.21)$ & $99.92_{\pm0.07}(0.08)$ & $94.25_{\pm0.15}(0.05)$ & $4.48$ & $2.51$ \\
\multicolumn{1}{c|}{SalUn}    & $2.38_{\pm0.31}(3.42)$ & $15.45_{\pm0.14}(1.54)$ & $99.62_{\pm0.22}(0.38)$ & $94.11_{\pm0.23}(0.19)$ & $1.39$ & $\mathbf{2.50}$ \\
\hline
\multicolumn{1}{c|}{\textbf{Ours}}     & $3.24_{\pm0.12}(\mathbf{2.56})$ & $14.22_{\pm0.11}(\mathbf{0.31})$ & $99.72_{\pm0.20}({0.28})$ & $93.90_{\pm0.10}({0.40})$ & $\mathbf{0.88}$ & $\mathbf{2.50}$ \\
\hline
\hline
& \multicolumn{6}{c}{Cifar-10 Class-wise Forgetting} \\ 
{\textbf{Methods}} & \multicolumn{1}{c}{\textbf{UA}} & \multicolumn{1}{c}{\textbf{MIA-Efficacy}} & \multicolumn{1}{c}{\textbf{RA}} & \multicolumn{1}{c}{\textbf{TA}} & \multicolumn{1}{c}{\textbf{Avg. Disparity}} & {\textbf{Run Time (min)}} \\ 
\midrule
\multicolumn{1}{c|}{Retrain} & $100.00_{\pm0.00}$ & $100.00_{\pm0.00}$ & $100.00_{\pm0.00}$ & $94.81_{\pm0.09}$ & $0.00$ & $82.00$ \\
\multicolumn{1}{c|}{FT}      & $8.36_{\pm3.03}(91.64)$ & $40.76_{\pm8.03}(59.24)$ & $99.92_{\pm0.03}(\mathbf{0.08})$ & $94.41_{\pm0.29}(0.40)$ & $37.84$ & $2.50$ \\
\multicolumn{1}{c|}{SalUn}    & $99.78_{\pm0.15}(0.22)$ & $100.00_{\pm0.00}(\mathbf{0.00})$ & $99.47_{\pm0.23}(0.53)$ & $93.55_{\pm0.44}({1.26})$ & $0.50$ & $\mathbf{2.50}$ \\
\hline
\multicolumn{1}{c|}{\textbf{Ours}}     & $99.98_{\pm0.02}(\mathbf{0.02})$ & $100.00_{\pm0.00}(\mathbf{0.00})$ & $99.69_{\pm0.30}({0.31})$ & $93.44_{\pm0.30}(1.37)$ & $\mathbf{0.42}$ & $\mathbf{2.50}$ \\
\bottomrule
\end{tabular}}
}

\vspace{0.01in}

\resizebox{\textwidth}{!}{%
{\begin{tabular}{lccccccc}
\toprule
 & \multicolumn{6}{c}{Cifar-100 Random Forgetting} \\ 
{\textbf{Methods}} & \textbf{UA} & \textbf{MIA-Efficacy} & \textbf{RA} & \textbf{TA} & \textbf{Avg. Disparity} & \textbf{Run Time (min.)} \\ 
\midrule
\multicolumn{1}{c|}{Retrain} & $24.75_{\pm0.11}$ & $49.68_{\pm0.35}$ & $99.98_{\pm0.01}$ & $74.57_{\pm0.06}$ & 0.00 & 82.33 \\
\multicolumn{1}{c|}{FT}      & $0.11_{\pm0.03}(24.64)$ & $5.66_{\pm0.47}(44.02)$ & $99.97_{\pm0.01}(\mathbf{0.01})$ & $75.45_{\pm0.17}(\mathbf{0.88})$ & 16.65 & 2.50 \\
\multicolumn{1}{c|}{SalUn}    & $27.75_{\pm0.33}(3.00)$ & $71.42_{\pm0.11}(21.74)$ & $98.65_{\pm0.29}(1.33)$ & $68.97_{\pm0.39}(5.60)$ & $7.91$ & $\mathbf{1.3}$ \\
\hline
\multicolumn{1}{c|}{\textbf{Ours}}     & $23.53_{\pm0.22}(\mathbf{1.22})$ & $69.02_{\pm0.09}(\mathbf{19.34})$ & $98.74_{\pm0.12}(1.24)$ & $69.16_{\pm0.11}(5.41)$ & $\mathbf{6.80}$ & $\mathbf{1.3}$ \\
\hline
\hline
& \multicolumn{6}{c}{Cifar-100 Class-wise Forgetting} \\ 
{\textbf{Methods}} & \textbf{UA} & \textbf{MIA-Efficacy} & \textbf{RA} & \textbf{TA} & \textbf{Avg. Disparity} & \textbf{Run Time (min)} \\ 
\midrule
\multicolumn{1}{c|}{Retrain} & $100.00_{\pm0.00}$ & $100.00_{\pm0.00}$ & $99.98_{\pm0.01}$ & $73.75_{\pm0.20}$ & $0.00$ & $82.22$ \\
\multicolumn{1}{c|}{FT}      & $11.55_{\pm7.1}(88.45)$ & $59.33_{\pm17.57}(40.67)$ & $99.78_{\pm0.03}(0.20)$ & $74.61_{\pm0.30}(0.86)$ & $32.55$ & $2.52$ \\
\multicolumn{1}{c|}{SalUn}    & $100.00_{\pm0.00}(\mathbf{0.00})$ & $100.00_{\pm0.00}(\mathbf{0.00})$ & $99.63_{\pm0.10}(0.35)$ & $74.76_{\pm0.31}(1.01)$ & $0.35$ & $\mathbf{2.50}$ \\
\hline
\multicolumn{1}{c|}{\textbf{Ours}}     & $100.00_{\pm0.00}(\mathbf{0.00})$ & $100.00_{\pm0.00}(\mathbf{0.00})$ & $99.90_{\pm0.06}\mathbf{({0.08})}$ & $74.47_{\pm0.72}(\mathbf{0.72})$ & $\mathbf{0.19}$ & $\mathbf{2.50}$ \\
\bottomrule
\end{tabular}}
}
\vspace{-0.1in}
\label{tab:main_}
\end{table*}

{\subsection{Experiment}}\label{sec:experiment}

In the following, we verify our theoretical insights by evaluating the effectiveness of the mask-based FT machine unlearning methods through numerical experiments.

\textbf{Experimental Setup.}
We conduct our evaluations using the ResNet-18 backbone across the methods. The network is initially trained for classification over the CIFAR datasets for $182$ epochs with an initial learning rate of $0.1$ following a cosine decay schedule. For the unlearning procedure, the learning rate is set to $0.02$ for our method and $0.013$ for the SalUn method, following the recommendations from the official repository. We set the number of unlearning epochs to $10$. However, we observe that both methods benefit from a reduced number of unlearning epochs $(5)$ in the random forgetting scenario on CIFAR-100. 
We set the sparsity at $50\%$. Unlearning using the fine-tuning method employs a learning rate of $0.01$ for $10$ epochs. All methods utilize the SGD optimizer.

\textbf{Datasets and Models.}
Our baseline methods include the naive fine-tuning approach \citep{golatkar2020eternal, warnecke2021machine} and SalUn \citep{fan2023salun}, both implemented on ResNet-18 \citep{he2016deep}. For comparison, we also include the golden retrained model as a reference. Our experiments will focus on image classification using the CIFAR-10 \citep{krizhevsky2009learning} and CIFAR-100 \citep{krizhevsky2009learning}. More details on the experimental setup will be provided in \cref{sec:app_exp}.

\textbf{Evaluation Metrics.}
We follow the existing work to assess machine unlearning performance from different aspects \citep{golatkar2020eternal,graves2021amnesiac,thudi2022unrolling,liu2024model,sharma2024discriminative,zhu2024decoupling}. Specifically, we focus on the following evaluation metrics:
\begin{itemize}
    \item Unlearning accuracy (UA): We define $\operatorname{UA}(\mathbf{w}_t)=1-\operatorname{Acc}_{D_f}(\mathbf{w}_t)$ as \cite{liu2024model}, measuring how effectively the model has forgotten the targeted data. Here $\operatorname{Acc}_{D_f}(\mathbf{w}_t)$ is the accuracy of the unlearned model on the forgetting dataset.
    \item Membership inference attack (MIA) on $D_f$ (MIA-Efficacy): The efficacy of MIA on the forget dataset, which assesses whether the model still retains any identifiable information about the forgetting data.
    \item Retaining accuracy (RA): The accuracy of the model on the remaining dataset $D_r$ after unlearning, measuring how well the model retains its performance from the pretrained model.
    \item Testing accuracy (TA): The accuracy of the model on the independent test dataset, indicating its generalization ability after unlearning. 
    \item Average Disparity (Avg. Disparity): The mean absolute difference between the performance metrics of the unlearned model and the retrained model, calculated across all evaluation aspects. 
    \item Run-time efficiency (RTE): RTE evaluates the computational efficiency of the unlearning process, including the run-time cost taken to execute the unlearning procedure.
\end{itemize}
Note that a smaller performance gap between the unlearned model and the golden retrained model indicates the better performance of approximate unlearning.


\textbf{Masking Strategies Enhance Unlearning Accuracy in Naive Fine-Tuning.}
The performance of Retention-Based Masking (Ours) and Forgetting-Based Masking (SalUn) against Fine-Tuning highlights how masking significantly improves unlearning accuracy, measured as the distance from the retrained model. Specifically, in CIFAR-10 Class-wise Forgetting, naive fine-tuning exhibits a significant gap of $91.64$ compared to the retrained model, which is drastically reduced by SalUn ($0.22$) and further refined by RBM to $0.02$. Similarly, for CIFAR-100 Class-wise Forgetting, fine-tuning shows a large gap of $87.96$, reduced to $0.00$ by SalUn and maintained by RBM at $0.00$.
In random forgetting scenarios, in CIFAR-10 Random Forgetting, fine-tuning achieves a gap of $5.62$, while SalUn reduces this to $3.42$ and RBM further minimizes it to $2.56$. A similar trend is observed for CIFAR-100 Random Forgetting, where fine-tuning starts with a gap of $24.64$, SalUn reduces it to $3.00$, and RBM achieves a further improvement to $1.22$. These results demonstrate that masking strategies, whether leveraging forgetting-based or retention-based saliency, are essential for enhancing unlearning accuracy in naive fine-tuning.

\textbf{Retention-Based Masking Improves Retaining Accuracy.}
Next, we compare retention-based masking (Ours) and forgetting-based masking (SalUn) to evaluate how different masking approaches impact retaining accuracy, measured by the distance from the retrained model. For CIFAR-10 Random Forgetting, SalUn achieves a gap of $0.38$, while RBM improves it to $0.28$. For CIFAR-100 Random Forgetting, SalUn shows a gap of $1.33$, whereas RBM significantly reduces it to $1.24$.
In class-wise forgetting scenarios, RBM consistently maintains an advantage: for CIFAR-10 Classwise Forgetting, SalUn achieves a gap of $0.53$ compared to $0.31$ for RBM. Similarly, for CIFAR-100 Class-wise Forgetting, SalUn achieves a gap of $0.35$, while RBM further improves this to $0.08$. These results underscore that the retention-based saliency approach is more effective in preserving the performance of the remaining dataset, aligning with our theoretical insights in \cref{thm:masked}.

\textbf{Retention-Based Masking Minimizes Average Disparity Across Tasks.}
In addition to improving retaining accuracy, RBM also reduces average disparity across all tasks. For CIFAR-10 Random Forgetting, RBM reduces the disparity to $0.88$, compared to $1.39$ for SalUn. Similarly, in CIFAR-100 Random Forgetting, RBM achieves a reduced disparity of 6.80, lower than $7.91$ for SalUn.
In CIFAR-10 Classwise Forgetting, RBM reduces the disparity to $0.42$, compared to $0.50$ for SalUn. For CIFAR-100 Class-wise Forgetting, RBM achieves an exceptionally low disparity of $0.19$, significantly lower than $0.35$ for SalUn. These results further validate the robustness of RBM in effectively balancing retaining and unlearning objectives.

\begin{figure}[!h]
    \begin{subfigure}{.33\textwidth}
        \centering
        \includegraphics[width=\linewidth]{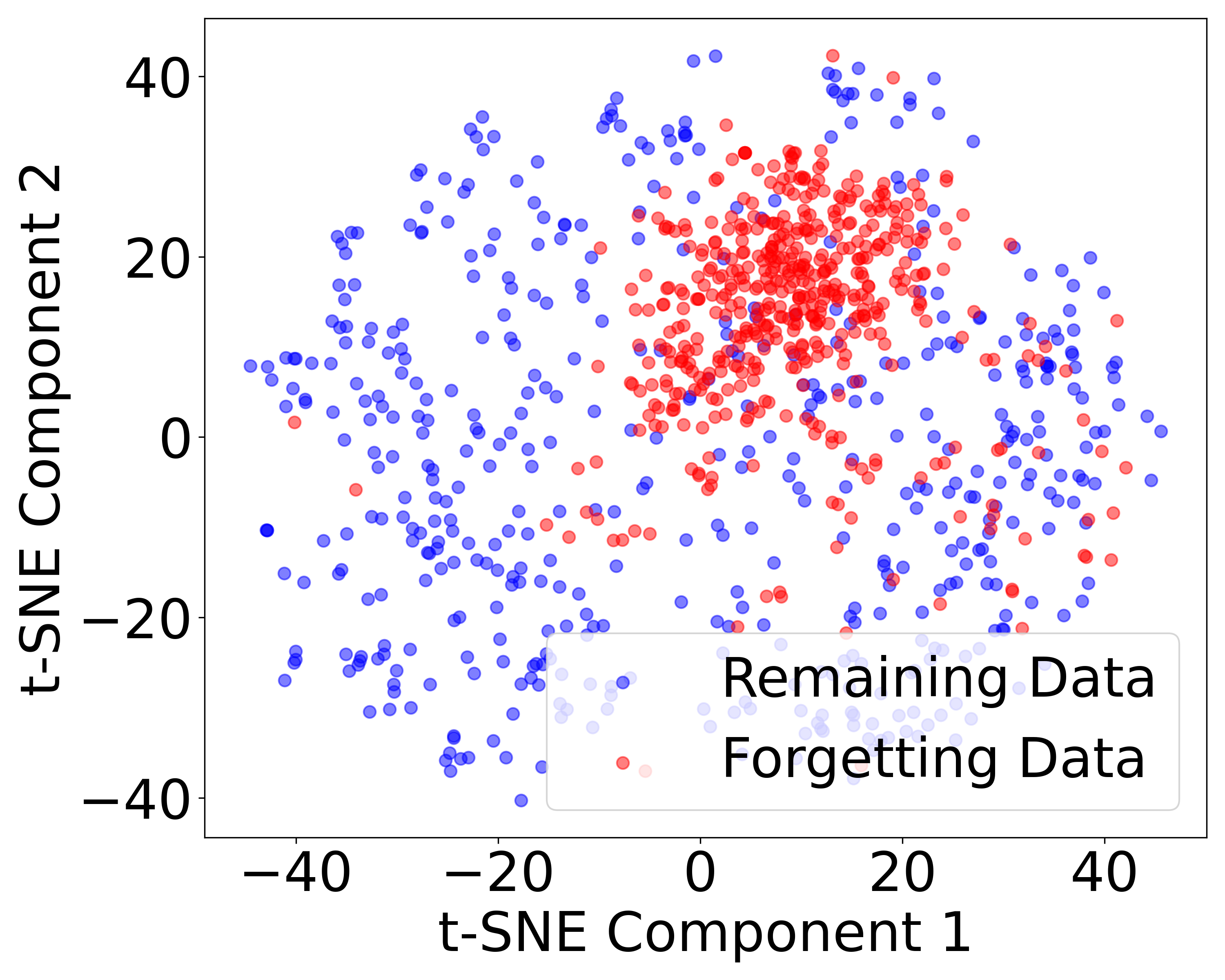}
        \caption{CIFAR-10-Class 3}
        \label{fig:c10_3}
    \end{subfigure}%
    \begin{subfigure}{.33\textwidth}
        \centering
        \includegraphics[width=\linewidth]{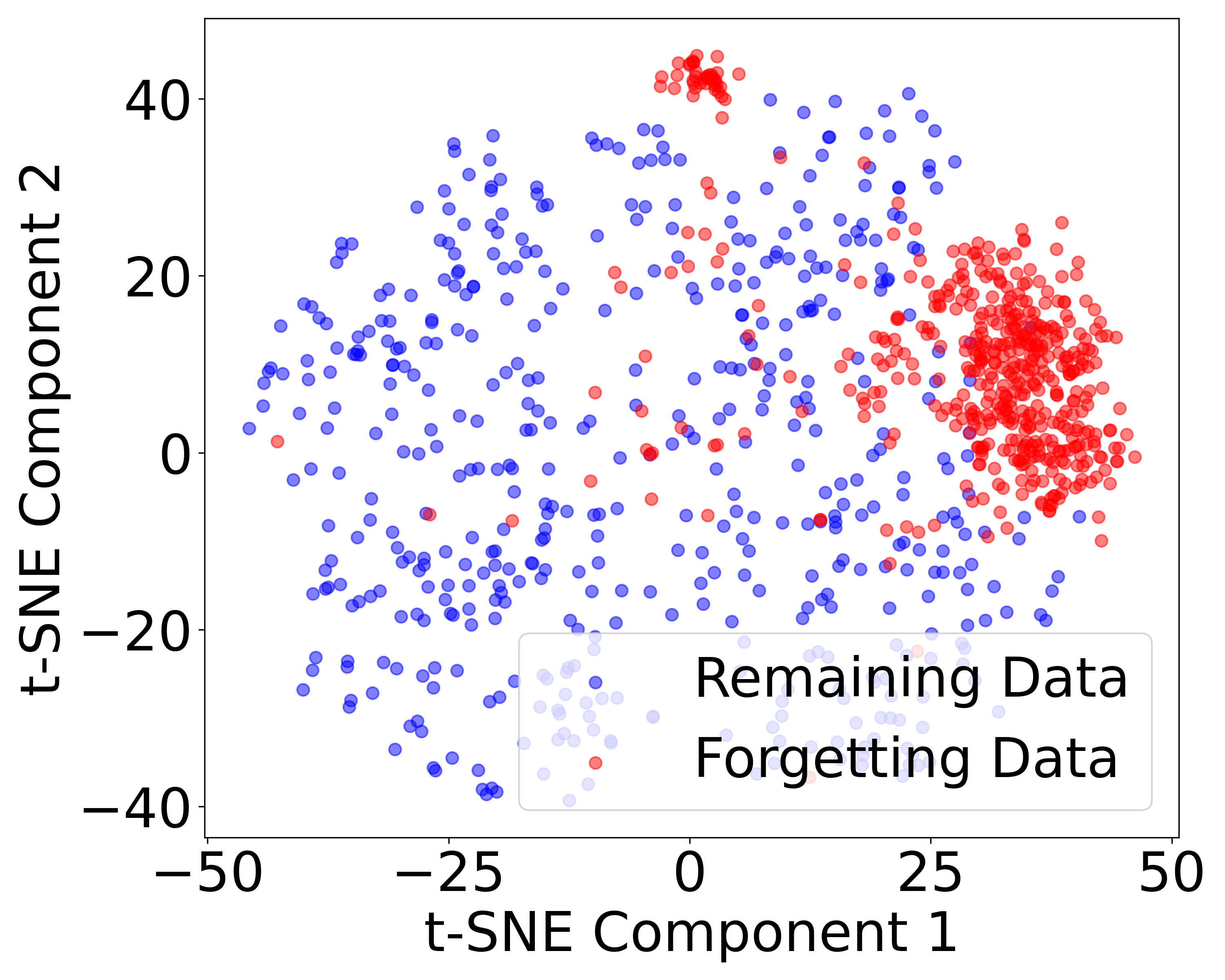}
        \caption{CIFAR-10-Class 6}
        \label{fig:c10_6}
    \end{subfigure}
    \begin{subfigure}{.33\textwidth}
        \centering
        \includegraphics[width=\linewidth]{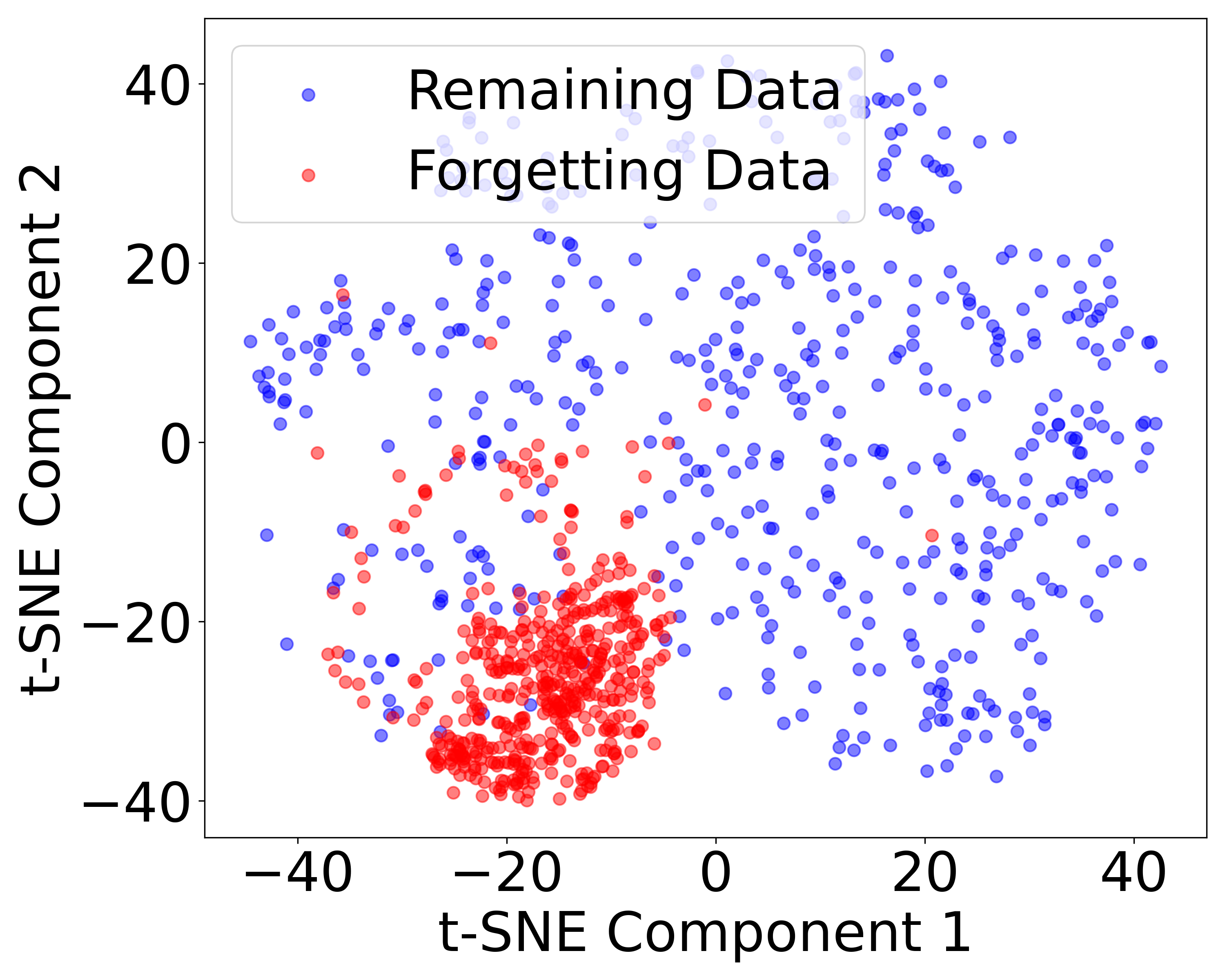}
        \caption{CIFAR-10-Class 9}
        \label{fig:c10_9}
    \end{subfigure}

        \begin{subfigure}{.33\textwidth}
        \centering
        \includegraphics[width=\linewidth]{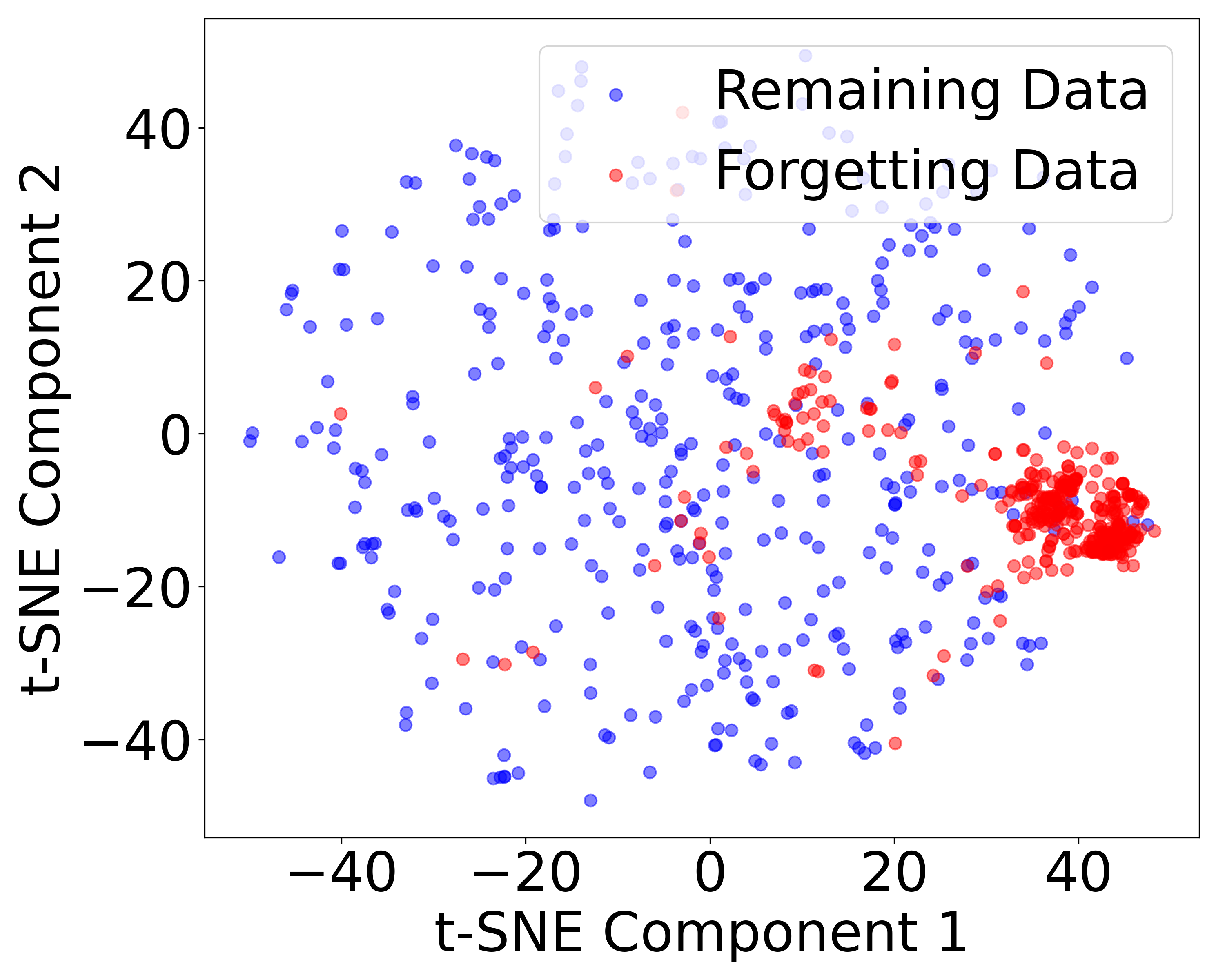}
        \caption{CIFAR-100-Class 30}
        \label{fig:c100_30}
    \end{subfigure}%
    \begin{subfigure}{.33\textwidth}
        \centering
        \includegraphics[width=\linewidth]{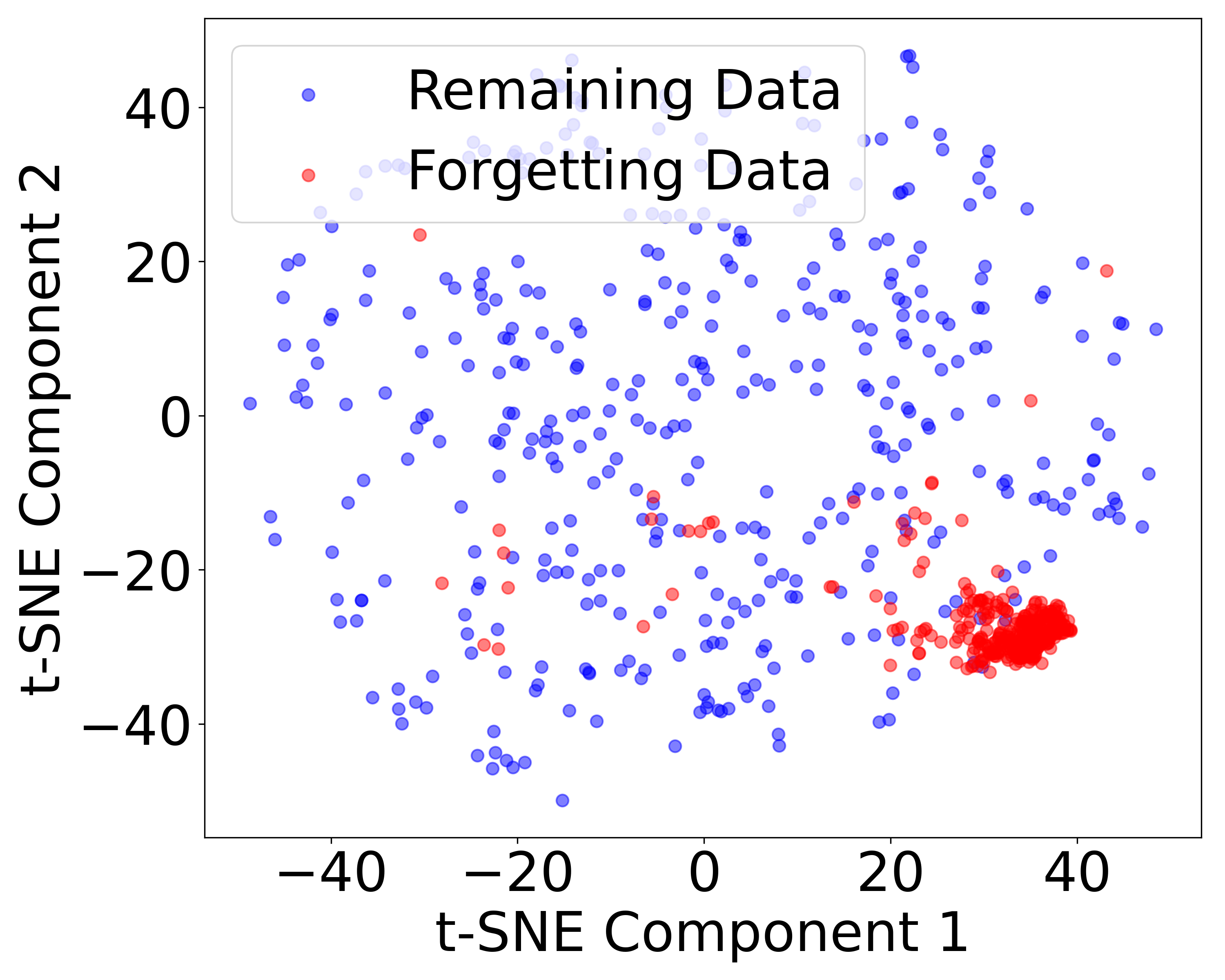}
        \caption{CIFAR-100-Class 60}
        \label{fig:c100_60}
    \end{subfigure}
    \begin{subfigure}{.33\textwidth}
        \centering
        \includegraphics[width=\linewidth]{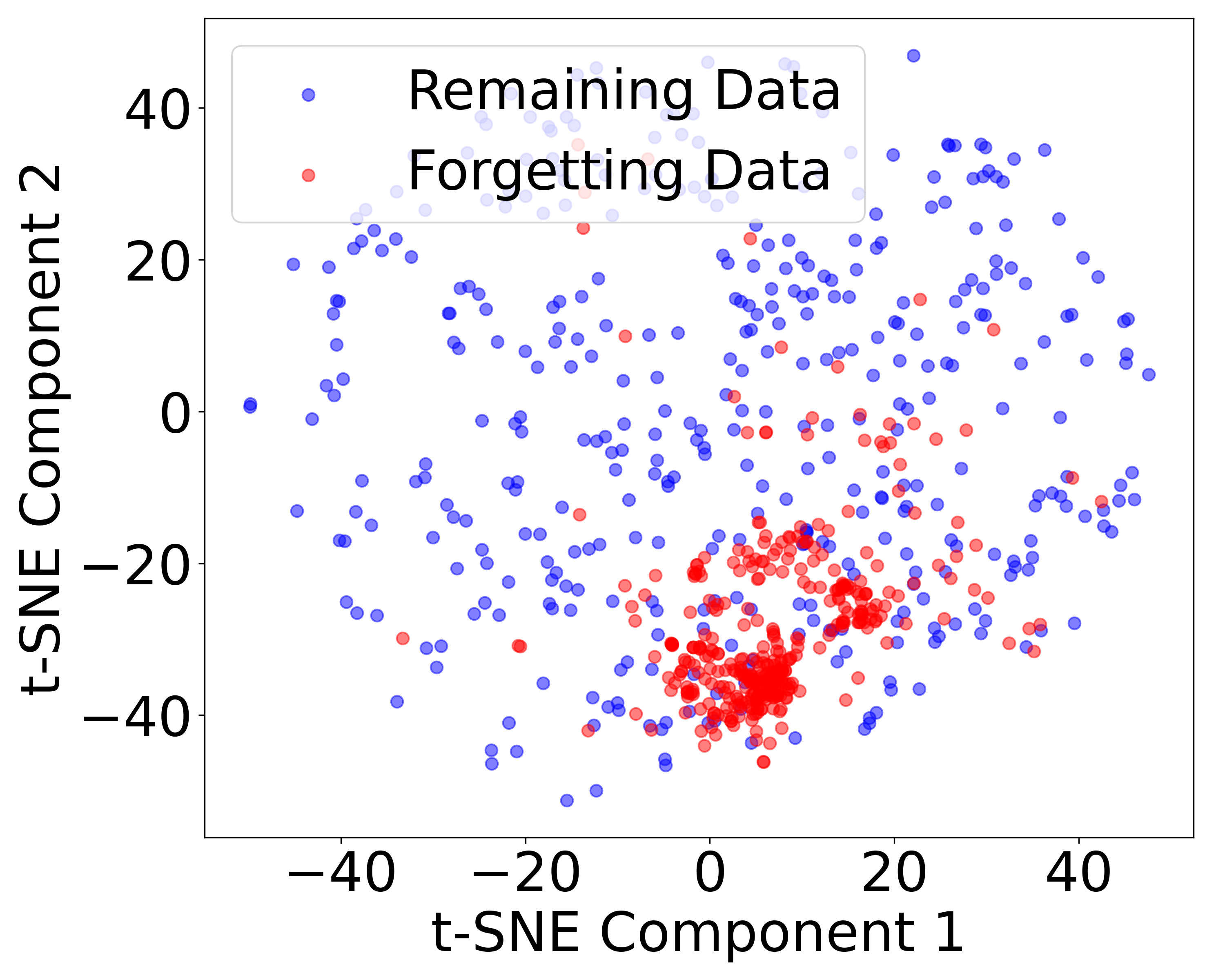}
        \caption{CIFAR-100-Class 90}
        \label{fig:c100_90}
    \end{subfigure}
\caption{Visualization of Remaining Data and Forgetting Data Features Across Various Dataset. Figures \ref{fig:c10_3}-\ref{fig:c10_9} focus on classes 3, 6, and 9 in CIFAR-10 and Figures \ref{fig:c100_30}-\ref{fig:c100_90} focus on classes 30, 60, and 90 in CIFAR-100.}
\label{fig:visu}
\end{figure}

\textbf{Visualization of Remaining Data and Forgetting Data Features.} The visualization in \cref{fig:visu} uses t-SNE to project feature representations of the forgetting and remaining data in CIFAR-10 and CIFAR-100 datasets. The red points correspond to forgetting data, and the blue points represent remaining data. This visualization aims to demonstrate that, in class-wise datasets, the unlearning task for a specific class may involve distinct features. In such cases, naive fine-tuning (FT) methods tend to contribute less towards forgetting the class and focus more on retaining features from the pretrained model.

\section{Conclusion}
In conclusion, we present the first theoretical analysis of fine-tuning methods for machine unlearning within a linear regression framework. Our analysis, covering two scenarios—distinct and overlapping feature sets—demonstrates that while fine-tuning can achieve optimal retaining accuracy, it fails to fully unlearn the forgetting dataset. This failure arises from the pretrained model retaining information about the forgetting data. To address this, we propose a theoretical solution and introduce Retention-Based Masking (RBM), a strategy that constructs masks based on the remaining dataset to preserve overlapping features. Our experimental results demonstrate that RBM achieves a better balance between unlearning and retaining objectives compared to forgotten-based masking approaches.

\bibliography{main}
\bibliographystyle{tmlr}

\appendix

\section{Discussion}
Our analysis is limited to the overparameterized linear regression setting, which simplifies understanding but does not fully capture the complexity of modern nonlinear models. The assumptions of distinct and overlapping features are idealized and may not directly reflect real-world data distributions. Empirically, our evaluation is restricted to moderate-scale datasets and fine-tuning–based baselines, leaving broader comparisons to future work. Extending our framework to nonlinear or large-scale settings and exploring efficient or certified unlearning mechanisms are promising directions for future research.

\section{Experimental Details}
\label{sec:app_exp}
\subsection{Verification via Simulation}
To empirically validate the theoretical findings presented in \cref{thm:no_ov}, \cref{thm:overlap} and \cref{thm:masked} regarding the performance of unlearned models through fine-tuning, we first conducted a series of synthetic experiments. 

\textbf{Data Generation.}
We constructed two data matrices, $\mathbf{X}_r$ and $\mathbf{X}_f$, representing the remaining data and the forgetting data, respectively. 
The remaining data matrix, $\mathbf{X}_r^{\top}$, was structured as $[\mathbf{R}^{\top}, \mathbf{L}_1^{\top}, \mathbf{0}]$, and the forgetting data matrix, $\mathbf{X}_f^{\top}$, as $[\mathbf{0}, \mathbf{L}_2^{\top}, \mathbf{F}^{\top}]$, where $\mathbf{L}_1^{\top}=\mathbf{L}_2^{\top}=\mathbf{0}$ to enforce the non-overlapping case. Here, $\mathbf{R}^{\top}$ and $\mathbf{F}^{\top}$ are random matrices corresponding to different feature sets, and the zeros represent the distinct features across the datasets. We set the total number of data points to $n=40$ and the total number of features to $d=40$. The remaining data consisted of $n_r=30$ samples with $d_r=20$ features, while the forgetting data comprised $n_f=10$ samples with $d_f=d-d_r=20$ features. To simulate a controlled environment, we fixed the number of overlapping features to $d_{lap}=0$ and $d_{lap}=8$ for non-overlapping case and overlapping case, respectively.

\textbf{Label Generation.}
We generated the true coefficient vector $\mathbf{w}_* \in \mathbb{R}^d$ by sampling from a standard normal distribution. The labels were created using a linear regression model without added noise:$\mathbf{y}=\mathbf{X}^{\top} \mathbf{w}_*$.
The labels were partitioned into $\mathbf{y}_r$ and $\mathbf{y}_f$, corresponding to the remaining and forgetting data, respectively.

\textbf{Model Training.}
To compare the effects of fine-tuning, we considered two models: the fine-tuning model $\mathbf{w}_t$ and the golden model $\mathbf{w}_g$. Specifically, $\mathbf{w}_t$ was obtained by fine-tuning on a subset of the remaining data, denoted as $\mathbf{X}_t$, which consisted of the first $n_t$ data points from $\mathbf{X}_r$. The value of $n_t$ varied from 1 to $n_r-1$ to study the impact of the fine-tuning data size. The initial model $\mathbf{w}_o$ was derived from the entire dataset $\mathbf{X}$ and calculated by the \cref{eq:original_training}. $\mathbf{w}_g$ was trained from scratch on the entire remaining data $\mathbf{X}_r$ and computed by solving \cref{eq:unlearn_fine-tuning}. If considering the masked case in synthetic data, the masked pretrained model will be constructed by zeroing out the coefficients corresponding to the forgetting data features with (without) overlapping features.

\textbf{Evaluation Metrics.} 
The performance of the models was assessed using the Mean Squared Error (MSE) on both the remaining and forgetting data:
\begin{itemize}
    \item Remaining Data Loss (RA Loss):$\operatorname{MSE}_{\mathrm{RA}}(\mathbf{w})=\frac{1}{n_r}\|\mathbf{X}_r \mathbf{w}-\mathbf{y}_r\|^2$
    \item Unlearning Data Loss (UA Loss):$\operatorname{MSE}_{\mathrm{UA}}(\mathbf{w})=\frac{1}{n_f}\|\mathbf{X}_f \mathbf{w}-\mathbf{y}_f\|^2$.
\end{itemize}

\textbf{Experimental Results}
\cref{fig:regu_noverlap} and \cref{fig:regu_overlap} illustrate that the masked fine-tuning method discussed in \cref{sec:masked}  can significantly improve unlearning accuracy while preserving the retaining accuracy. Specifically, both the remaining loss and unlearning loss of $\hat{\mathbf{w}}_t$ perfectly match those of the golden model under both distinct and overlapping feature scenarios. Additionally, \cref{fig:re_nore_com} present comparisons of machine unlearning loss for different approaches to handling overlapping features: \cref{fig:regu_do1} retains overlapping features from the pretrained model, demonstrating matching performance between the masked $\mathbf{w}_t$ model and golden model $\mathbf{w}_g$; whereas \cref{fig:regu_do2} discards the overlapping features, resulting in a decline in retaining accuracy. These empirical results align well with our theoretical findings.

\subsection{Additional Real-world Details} \label{sec:exp_details}

\begin{table*}[ht!]
\caption{\textbf{Unlearning Results on TinyImageNet.} Performance comparison on the more challenging TinyImageNet dataset. Random 10\% forgetting scenario with 50\% masking.
(\textbf{RA:} Retaining Accuracy, \textbf{UA:} Unlearning Accuracy, \textbf{TA:} Test Accuracy, \textbf{MIA Efficacy:} Membership Inference Attack Efficacy, \textbf{Avg. Disparity:} utility–privacy imbalance).}
\vspace{0.05in}
\centering
\resizebox{0.75\textwidth}{!}{%
\begin{tabular}{lccccc}
\toprule
\textbf{Method} & \textbf{RA} & \textbf{UA} & \textbf{TA} & \textbf{MIA Efficacy} & \textbf{Avg. Disparity} \\
\midrule
Retrain & 99.98 & 39.91 & 62.21 & 65.13 & --- \\
RBM     & 96.89 & 47.23 & 59.91 & 69.98 & 4.39 \\
SalUn   & 97.20 & 31.30 & 59.10 & 71.08 & 5.11 \\
\bottomrule
\end{tabular}}

\vspace{-0.05in}
\label{tab:tinyimagenet_unlearning}
\end{table*}

We further evaluate our approach on the TinyImageNet dataset. This dataset is a subset of ImageNet, containing 200 classes with 500 training samples per class and 50 samples each for validation and testing. We consider a random forgetting scenario involving $10\%$ of the data, with a masking ratio of $50\%$ for both methods. The larger class space and higher visual diversity make this a particularly challenging benchmark. Nevertheless, our approach substantially outperforms the baseline method, showing only a moderate decline in test accuracy—reflecting the dataset’s inherent complexity. RBM maintains high retention accuracy while effectively mitigating MIA attacks to levels comparable to a retrained model, thereby achieving a lower average disparity relative to this standard.

\begin{table*}[ht!]
\caption{\textbf{SVHN Unlearning Results using ViT Tiny.} Evaluation of unlearning methods on SVHN. 
\textbf{RA:} Retaining Accuracy, \textbf{UA:} Unlearning Accuracy, \textbf{TA:} Test Accuracy, \textbf{MIA Efficacy:} Membership Inference Attack Efficacy.}
\vspace{0.05in}
\centering
\setlength{\tabcolsep}{5pt}
\resizebox{0.75\textwidth}{!}{%
\begin{tabular}{lccccc}
\toprule
\textbf{Method} & \textbf{RA} & \textbf{UA} & \textbf{TA} & \textbf{MIA Efficacy} & \textbf{Avg. Disparity} \\
\midrule
Retrain & 100.00 & 9.76 & 89.38 & 15.27 & --- \\
RBM     & 94.12  & 8.15 & 88.73 & 13.82 & 2.39 \\
SalUn   & 95.93  & 7.00 & 88.55 & 17.89 & 2.57 \\
\bottomrule
\end{tabular}}
\vspace{-0.05in}
\label{tab:svhn_unlearning}
\end{table*}

To further assess generality, we incorporate a new set of experiments using a Vision Transformer (ViT) backbone, extending beyond the conventional ResNet-based architectures commonly employed in prior work. Specifically, we adopt the ViT-Tiny model (approximately 5.7M parameters) and evaluate it on the SVHN dataset. We randomly forget $10\%$ of the data using a $50\%$ masking ratio. The learning rate for the ViT baseline is set to $1\times10^{-3}$, and we perform a learning-rate ablation for SalUn by varying it between $6\times10^{-4}$ and $2\times10^{-3}$, obtaining the best performance at $1\times10^{-3}$. For our method, the learning rate is adjusted to $5\times10^{-4}$, yielding more stable convergence and consistent improvements across all metrics. As shown in Table~\ref{tab:svhn_unlearning}, our proposed method consistently surpasses the forget-set–based unlearning method SalUn across all evaluated metrics.

\section{Proofs}
\subsection{Useful properties}
Before presenting the detailed proofs of the theorems, we first introduce several useful properties of the projection matrix and the minimum norm solution.

\begin{property}[Projection properties]\label{pro:proj_pro}
    Let $\mathbf{P}  $ be a projection operator that projects onto a subspace $\mathbf{X}  \subseteq \mathbb{R}^{d \times n}$. Then, $\mathbf{P} $ holds the following properties:
    \begin{itemize}
        \item [1.] Symmetric: $\mathbf{P} =\mathbf{P}^{\top}$;
        \item [2.] Idempotent: $\mathbf{P}^2=\mathbf{P} $;
        \item [3.] $\mathbf{I}-\mathbf{P} $ is also a projection operator, projecting onto the subspace orthogonal to $\mathbf{X} $. Therefore, $(\mathbf{I}-\mathbf{P} ) \mathbf{P} =\mathbf{0}$;
        \item [4.] Let $\mathbf{v} \in \mathbb{R}^d$ be an arbitrary vector, it holds that $\|(\mathbf{I}-\mathbf{P} ) \mathbf{v}\|^2=\mathbf{v}^{\top}(\mathbf{I}-\mathbf{P} )^2 \mathbf{v}=\mathbf{v}^{\top}(\mathbf{I}-\mathbf{P} ) \mathbf{v}=\|\mathbf{v}\|^2-\|\mathbf{P}  \mathbf{v}\|^2$;
        \item [5.] Contraction: $\|\mathbf{P}  \mathbf{v}\| \leq\|\mathbf{v}\|$, holding in equality if and only if $\mathbf{P}  \mathbf{v}=\mathbf{v}$.
    \end{itemize}
\end{property}

\begin{proof}
    See \citep{zarantonello1971projections} for the proofs and for more properties.
\end{proof}

\begin{corollary}[Projection Matrix properties]\label{pro:proj_matrix}
    Let $\mathbf{P}= \mathbf{X} (\mathbf{X}^{\top}\mathbf{X} )^{-1}\mathbf{X}^{\top}, \mathbf{P}_r, \mathbf{P}_f, \mathbf{P}_t$ be the corresponding projection operator for $\mathbf{X}, \mathbf{X}_r, \mathbf{X}_f, \mathbf{X}_t$ respectively. Under \cref{asm:ortho_w}, the remaining(forgetting) dataset matrix can be denoted as $\mathbf{X}_r = [\begin{array}{l} \mathbf{R} \\ \mathbf{0}\end{array}] $ and $\mathbf{X}_f  = [\begin{array}{l} \mathbf{0} \\ \mathbf{F}\end{array}] $, where $\mathbf{R} \subseteq \mathbb{R}^{d_r \times {(n-n_f)}}$ and $\mathbf{F} \subseteq \mathbb{R}^{d_f \times n_f} $ correspond to the non-zero parts. Then, it holds that:
    \begin{itemize}
        \item [1.] $\mathbf{P} = \left[\begin{array}{cc}
                    \mathbf{R} (\mathbf{R}^{\top}\mathbf{R} )^{-1}\mathbf{R}^{\top} & \mathbf{0} \\
                    \mathbf{0} & \mathbf{F} (\mathbf{F}^{\top}\mathbf{F} )^{-1}\mathbf{F}^{\top}
                    \end{array} \right] = \mathbf{P}_r + \mathbf{P}_f $;
        \item [2.] $\mathbf{P}_r = \left[\begin{array}{cc}
                    \mathbf{R} (\mathbf{R}^{\top}\mathbf{R} )^{-1}\mathbf{R}^{\top} & \mathbf{0} \\
                    \mathbf{0} & \mathbf{0}
                    \end{array} \right]$ and $\mathbf{P}_f = \left[\begin{array}{cc} \mathbf{0} & \mathbf{0}\\
                    \mathbf{0} & \mathbf{F} (\mathbf{F}^{\top}\mathbf{F} )^{-1}\mathbf{F}^{\top}
                    \end{array} \right]$;
        \item [3.] $\mathbf{X}(\mathbf{I}-\mathbf{P}) = (\mathbf{I}-\mathbf{P})\mathbf{X} = 0$, and the conclusion also holds for $\mathbf{P}_r, \mathbf{P}_f, \mathbf{P}_t$ with $\mathbf{X}_r, \mathbf{X}_f, \mathbf{X}_t$ respectively;
        \item [4.] For any matrix $\mathbf{A}$ that is a submatrix of $\mathbf{X}$, it holds that $\mathbf{A}=\mathbf{P A}$, where $\mathbf{P}$ is the projection space of $\mathbf{X}$. Moreover, if $\mathbf{P}_A$ is the projection space of $\mathbf{A}$, it holds that $\mathbf{P} \mathbf{P}_A=\mathbf{P}_A$, i.e. $\mathbf{X}_r \mathbf{P} = \mathbf{X}_r$, $\mathbf{X}_f \mathbf{P} = \mathbf{X}_f$, $\mathbf{X}_r \mathbf{P}_f = \mathbf{X}_f \mathbf{P}_r = 0 $.
    \end{itemize}
\end{corollary}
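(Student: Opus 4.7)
The plan is to split the four items and exploit the block structure of $\mathbf{X}$ that \cref{asm:ortho_w} imposes. After reordering feature coordinates so the first $d_r$ correspond to $D_r$ and the last $d_f$ to $D_f$, we have $\mathbf{X}_r^{\top}=[\mathbf{R}^{\top},\mathbf{0}]$ and $\mathbf{X}_f^{\top}=[\mathbf{0},\mathbf{F}^{\top}]$, so $\mathbf{X}=[\mathbf{X}_r,\mathbf{X}_f]$ is itself block-diagonal with diagonal blocks $\mathbf{R}$ and $\mathbf{F}$. All four items then reduce to careful block-matrix arithmetic plus one or two generic projection identities.

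For items 1 and 2, I would first note that the disjoint row-support of $\mathbf{X}_r$ and $\mathbf{X}_f$ forces $\mathbf{X}_r^{\top}\mathbf{X}_f=\mathbf{0}$, so $\mathbf{X}^{\top}\mathbf{X}$ is block-diagonal with diagonal blocks $\mathbf{R}^{\top}\mathbf{R}$ and $\mathbf{F}^{\top}\mathbf{F}$. Under the paper's overparameterized, generic-position setting these Gram blocks are invertible, so $(\mathbf{X}^{\top}\mathbf{X})^{-1}$ is block-diagonal with blocks $(\mathbf{R}^{\top}\mathbf{R})^{-1}$ and $(\mathbf{F}^{\top}\mathbf{F})^{-1}$. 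Substituting into $\mathbf{P}=\mathbf{X}(\mathbf{X}^{\top}\mathbf{X})^{-1}\mathbf{X}^{\top}$ and carrying out the block product delivers the claimed form of $\mathbf{P}$. Item 2 is obtained by applying the same projection formula directly to $\mathbf{X}_r$ (respectively $\mathbf{X}_f$), whose block form is even simpler, and the additive decomposition $\mathbf{P}=\mathbf{P}_r+\mathbf{P}_f$ then drops out by inspection.

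For item 3 I would not use the block structure at all: for any matrix $\mathbf{M}$ whose Gram matrix is invertible, $\mathbf{P}_M\mathbf{M}=\mathbf{M}(\mathbf{M}^{\top}\mathbf{M})^{-1}(\mathbf{M}^{\top}\mathbf{M})=\mathbf{M}$, and by symmetry $\mathbf{M}^{\top}\mathbf{P}_M=\mathbf{M}^{\top}$; applying this to $\mathbf{X},\mathbf{X}_r,\mathbf{X}_f,\mathbf{X}_t$ finishes the item. For item 4, if $\mathbf{A}$ is formed from a subset of the columns of $\mathbf{X}$, then each column of $\mathbf{A}$ lies in the column space of $\mathbf{X}$, which is fixed pointwise by $\mathbf{P}$; hence $\mathbf{P}\mathbf{A}=\mathbf{A}$ and therefore $\mathbf{P}\mathbf{P}_A=\mathbf{P}\mathbf{A}(\mathbf{A}^{\top}\mathbf{A})^{-1}\mathbf{A}^{\top}=\mathbf{A}(\mathbf{A}^{\top}\mathbf{A})^{-1}\mathbf{A}^{\top}=\mathbf{P}_A$. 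The remaining cross-vanishing identities $\mathbf{X}_r^{\top}\mathbf{P}_f=\mathbf{0}$ and $\mathbf{X}_f^{\top}\mathbf{P}_r=\mathbf{0}$ fall out immediately after substituting the explicit block forms of $\mathbf{X}_r,\mathbf{X}_f,\mathbf{P}_r,\mathbf{P}_f$ from items 1--2.

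The hard part is essentially nonexistent: this is bookkeeping rather than new mathematics. The one subtlety I would flag is notational --- expressions such as ``$\mathbf{X}_r\mathbf{P}=\mathbf{X}_r$'' in item 4 are dimensionally inconsistent as written, since $\mathbf{X}_r\in\mathbb{R}^{d\times n_r}$ and $\mathbf{P}\in\mathbb{R}^{d\times d}$; I would read these throughout as $\mathbf{P}\mathbf{X}_r=\mathbf{X}_r$ (equivalently $\mathbf{X}_r^{\top}\mathbf{P}=\mathbf{X}_r^{\top}$), which is what the argument above actually establishes. Invertibility of the relevant Gram matrices is implicit in the paper's use of the closed-form projection and I would carry it as a standing assumption throughout.
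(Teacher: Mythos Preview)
Your proposal is correct and follows essentially the same route as the paper: block-diagonal arithmetic for items 1--2, the generic identity $\mathbf{P}_M\mathbf{M}=\mathbf{M}$ for item 3, and the ``columns of $\mathbf{A}$ lie in the column space of $\mathbf{X}$'' observation for item 4 (the paper phrases this last step via a selection matrix $\mathbf{A}=\mathbf{X}\mathbf{C}$, which is the same argument in coordinates). Your flag about the dimensional inconsistency of ``$\mathbf{X}_r\mathbf{P}=\mathbf{X}_r$'' is apt; the paper's own proof has the same slip and should be read as $\mathbf{P}\mathbf{X}_r=\mathbf{X}_r$ throughout.
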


\begin{proof}[\bf Proof of \cref{pro:proj_matrix}]
    Firstly, based on the data composition, the overall dataset holds $\mathbf{X} = [\begin{array}{cc}
                    \mathbf{R}  & \mathbf{0} \\
                    \mathbf{0} & \mathbf{F} 
                    \end{array} ] $. Therefore, it follows:
    \begin{align*}
        \mathbf{P}&= \mathbf{X} (\mathbf{X}^{\top}\mathbf{X} )^{-1}\mathbf{X}^{\top} = [\begin{array}{cc}
                    \mathbf{R}  & \mathbf{0} \\
                    \mathbf{0} & \mathbf{F} 
                    \end{array} ] ([\begin{array}{cc}
                    \mathbf{R}^{\top}  & \mathbf{0} \\
                    \mathbf{0} & \mathbf{F}^{\top} 
                    \end{array} ] [\begin{array}{cc}
                    \mathbf{R}  & \mathbf{0} \\
                    \mathbf{0} & \mathbf{F} 
                    \end{array} ])^{-1} [\begin{array}{cc}
                    \mathbf{R}^{\top}  & \mathbf{0} \\
                    \mathbf{0} & \mathbf{F}^{\top} 
                    \end{array} ]\\
                  & =  [\begin{array}{cc}
                    \mathbf{R}  & \mathbf{0} \\
                    \mathbf{0} & \mathbf{F} 
                    \end{array} ] [\begin{array}{cc}
                    (\mathbf{R}^{\top}\mathbf{R})^{-1}  & \mathbf{0} \\
                    \mathbf{0} & (\mathbf{F}^{\top}\mathbf{F})^{-1} 
                    \end{array} ] [\begin{array}{cc}
                    \mathbf{R}^{\top}  & \mathbf{0} \\
                    \mathbf{0} & \mathbf{F}^{\top} 
                    \end{array} ].
    \end{align*}
    The remaining Projection matrices can be obtained by similar computations.
    
    Additionally, we have $\mathbf{X}(\mathbf{I}-\mathbf{P}) = (\mathbf{I}-\mathbf{P})\mathbf{X} = \mathbf{X}(\mathbf{I}-\mathbf{X} (\mathbf{X}^{\top}\mathbf{X} )^{-1}\mathbf{X}^{\top}) = 0$.
    
    Moreover, since $\mathbf{A}$ is a submatrix of $\mathbf{X}$, it can be represented as $\mathbf{A}=\mathbf{X} \mathbf{C}$ for some selective matrix $\mathbf{C}$. Therefore, we have:
    $$
    \mathbf{P A}=\mathbf{X}\left(\mathbf{X}^{\top} \mathbf{X}\right)^{-1} \mathbf{X}^{\top} \mathbf{X} \mathbf{C}=\mathbf{X} \mathbf{C}=\mathbf{A}.
    $$
    Meanwhile, it also holds that
    $$
    \mathbf{P} \mathbf{P}_A=\mathbf{X}(\mathbf{X}^{\top} \mathbf{X})^{-1} \mathbf{X}^{\top} \mathbf{X} \mathbf{C}(\mathbf{C}^{\top} \mathbf{X}^{\top} \mathbf{X} \mathbf{C})^{-1} \mathbf{C}^{\top} \mathbf{X}^{\top}=\mathbf{X} \mathbf{C}(\mathbf{C}^{\top} \mathbf{X}^{\top} \mathbf{X} \mathbf{C})^{-1} \mathbf{C}^{\top} \mathbf{X}^{.\top}=\mathbf{P}_A.
    $$
    $\mathbf{X}_r$ and $\mathbf{X}_f$ are submatrices of $\mathbf{X}$, each with disjoint spaces. The projection of $\mathbf{X}_r$ onto the space of $\mathbf{X}_f$ should be zero.
    $$\mathbf{X}_r \mathbf{P}_f=\mathbf{X}_r \mathbf{X}_f(\mathbf{X}_f^{\top} \mathbf{X}_f)^{-1} \mathbf{X}_f^{\top}=0.$$
\end{proof}

\begin{corollary}[Minimum Norm Solution 1]\label{pro:min_norm_solu}
    Let $\mathbf{P}, \mathbf{P}_r, \mathbf{P}_f, \mathbf{P}_t$ be the corresponding projection operator for $\mathbf{X}, \mathbf{X}_r, \mathbf{X}_f, \mathbf{X}_t$ respectively. Then, the solution to the optimization problem \cref{eq:original_training}, \cref{eq:unlearn_fine-tuning} and \cref{eq:train_from_scratch} can be represented by:
    \begin{itemize}
        \item [1.] Under \cref{asm:ortho_w}, $\mathbf{w}_o = \mathbf{P}\mathbf{w}_*$, $\mathbf{w}_t = (\mathbf{I} -\mathbf{P}_t)\mathbf{w}_o + \mathbf{P}_t \mathbf{w}_*^r$, and $\mathbf{w}_g = \mathbf{P}_r\mathbf{w}_*^r$;
        \item [2.] Under \cref{asm:overlap}, $\mathbf{w}_o = \mathbf{P}\mathbf{w}_*$, $\mathbf{w}_t = (\mathbf{I} -\mathbf{P}_t)\mathbf{w}_o + \mathbf{P}_t (\mathbf{w}_*^r+\mathbf{w}_*^{lap})$, and $\mathbf{w}_g = \mathbf{P}_r(\mathbf{w}_*^r+\mathbf{w}_*^{lap})$;
        \item [3.]$ \mathbf{X}_r^{\top}\mathbf{w}_*^f = 0$ and $\mathbf{X}_f^{\top}\mathbf{w}_*^r = 0$.
    \end{itemize}   
\end{corollary}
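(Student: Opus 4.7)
The proof plan rests on the standard closed-form formula for the minimum-norm solution of an underdetermined consistent linear system: if $\mathbf{X}^\top \mathbf{w} = \mathbf{y}$ is consistent and $\mathbf{X}^\top \mathbf{X}$ is invertible (which holds in the overparameterized regime $n < d$ when $\mathbf{X}$ has full column rank), then the unique minimizer of $\|\mathbf{w}\|$ is $\mathbf{w} = \mathbf{X}(\mathbf{X}^\top \mathbf{X})^{-1}\mathbf{y}$, and this minimizer automatically lies in the range of the projection $\mathbf{P} = \mathbf{X}(\mathbf{X}^\top \mathbf{X})^{-1}\mathbf{X}^\top$. I would apply this formula to each of \cref{eq:original_training}, \cref{eq:unlearn_fine-tuning}, and \cref{eq:train_from_scratch}, combined with the population identities $\mathbf{y} = \mathbf{X}^\top \mathbf{w}_*$, $\mathbf{y}^r = \mathbf{X}_r^\top \mathbf{w}_*$, and the block decomposition of $\mathbf{w}_*$ from the remarks following \cref{asm:ortho_w} and \cref{asm:overlap}.

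I would prove part 3 first, since both the $\mathbf{w}_g$ and $\mathbf{w}_t$ derivations call on it. By the block structure in the remarks, $\mathbf{w}_*^r$ is supported on the coordinates of the $\mathbf{R}$ block and $\mathbf{w}_*^f$ on the coordinates of the $\mathbf{F}$ block, while $\mathbf{X}_r^\top$ is zero on the $\mathbf{F}$ coordinates and $\mathbf{X}_f^\top$ is zero on the $\mathbf{R}$ coordinates (independently of whether the overlap block $\mathbf{L}$ is present); a direct block multiplication then gives $\mathbf{X}_r^\top \mathbf{w}_*^f = 0$ and $\mathbf{X}_f^\top \mathbf{w}_*^r = 0$. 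For $\mathbf{w}_o$, substituting $\mathbf{y} = \mathbf{X}^\top \mathbf{w}_*$ into the min-norm formula gives $\mathbf{w}_o = \mathbf{X}(\mathbf{X}^\top \mathbf{X})^{-1}\mathbf{X}^\top \mathbf{w}_* = \mathbf{P}\mathbf{w}_*$ in one step. For $\mathbf{w}_g$, the same step yields $\mathbf{w}_g = \mathbf{P}_r \mathbf{w}_*$; then part 3 together with $\mathbf{P}_r = \mathbf{X}_r(\mathbf{X}_r^\top \mathbf{X}_r)^{-1}\mathbf{X}_r^\top$ implies $\mathbf{P}_r \mathbf{w}_*^f = 0$, collapsing the expression to $\mathbf{P}_r \mathbf{w}_*^r$ in the distinct case and to $\mathbf{P}_r(\mathbf{w}_*^r + \mathbf{w}_*^{lap})$ in the overlap case.

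The $\mathbf{w}_t$ identity requires a small translation trick. I would introduce $\mathbf{v} = \mathbf{w} - \mathbf{w}_o$ and rewrite \cref{eq:unlearn_fine-tuning} as the equivalent minimum-norm problem $\min_{\mathbf{v}} \|\mathbf{v}\|$ subject to $\mathbf{X}_t^\top \mathbf{v} = \mathbf{y}_t - \mathbf{X}_t^\top \mathbf{w}_o$ (consistent because $\mathbf{X}_t^\top$ is surjective onto $\mathbb{R}^{n_t}$ in the overparameterized regime). Applying the closed-form formula gives $\mathbf{v} = \mathbf{X}_t(\mathbf{X}_t^\top \mathbf{X}_t)^{-1}(\mathbf{y}_t - \mathbf{X}_t^\top \mathbf{w}_o)$; since $\mathbf{X}_t$ is a submatrix of $\mathbf{X}_r$, part 3 yields $\mathbf{X}_t^\top \mathbf{w}_*^f = 0$, so $\mathbf{y}_t = \mathbf{X}_t^\top \mathbf{w}_*^r$ (distinct case) or $\mathbf{y}_t = \mathbf{X}_t^\top (\mathbf{w}_*^r + \mathbf{w}_*^{lap})$ (overlap case). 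Substituting and factoring $\mathbf{X}_t^\top$ out on the right gives $\mathbf{v} = \mathbf{P}_t(\mathbf{w}_*^r - \mathbf{w}_o)$ (resp.\ $\mathbf{P}_t((\mathbf{w}_*^r + \mathbf{w}_*^{lap}) - \mathbf{w}_o)$), and adding back $\mathbf{w}_o$ produces the asserted $(\mathbf{I} - \mathbf{P}_t)\mathbf{w}_o + \mathbf{P}_t \mathbf{w}_*^r$ (resp.\ the overlap analogue). The only delicate point, and the main obstacle, is cleanly peeling the $\mathbf{w}_*^f$ component off of $\mathbf{y}_t$ at the right moment; once part 3 is in hand this is entirely mechanical, which is why I would establish and then reuse part 3 throughout the $\mathbf{w}_g$ and $\mathbf{w}_t$ derivations.
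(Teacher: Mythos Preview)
Your proposal is correct and follows essentially the same route as the paper: the paper simply invokes ``the method of Lagrange multipliers and the problem setup'' for parts 1 and 2 and then reads part 3 off the block structure, whereas you spell out the same computation via the closed-form minimum-norm formula $\mathbf{X}(\mathbf{X}^\top\mathbf{X})^{-1}\mathbf{y}$ and the translation $\mathbf{v}=\mathbf{w}-\mathbf{w}_o$ for the fine-tuning problem. Your only deviation is a cosmetic reordering (proving part 3 first so it can be reused when simplifying $\mathbf{P}_r\mathbf{w}_*$ and $\mathbf{y}_t$), which is a sensible expository choice and not a different argument.
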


\begin{proof}[\bf Proof of \cref{pro:min_norm_solu}]
    According to the method of Lagrange multipliers and the problem setup, it is easy to obtain the first two conclusions. For the last one, we have:
    $$ \mathbf{X}_r^{\top}\mathbf{w}_*^f = [\mathbf{R}^{\top},  \mathbf{0}] \mathbf{w}_*^f = 0 \quad \text{and } \quad \mathbf{X}_f^{\top}\mathbf{w}_*^r = [ \mathbf{0},   \mathbf{F}^{\top}] \mathbf{w}_*^r = 0.$$
\end{proof}

\subsection{Proof of \cref{thm:no_ov}}\label{sec:proof_of_noov}
Let us first focus on the performance of the golden model. Based on the definition of unlearning accuracy and retaining accuracy, we have
\begin{align*}
    \text{RL:} \quad L(\mathbf{w}_g, D_r) &= \frac{1}{n_r}\|\mathbf{X}_r^{\top}\mathbf{w}_g -\mathbf{y}_r\|^2 =\frac{1}{n_r} \|\mathbf{X}_r^{\top}\mathbf{P}_r\mathbf{w}_*^r -\mathbf{X}_r^{\top}\mathbf{w}_*^r\|^2 = \frac{1}{n_r}\|\mathbf{X}_r^{\top}(\mathbf{P}_r - \mathbf{I})\mathbf{w}_*^r\|^2 = 0,
\end{align*}
where the second equality arises from the model setting and Proposition \ref{pro:min_norm_solu}, while the penultimate equality is due to the properties of the projection matrix. According to \cref{pro:proj_matrix}, we have
\begin{align*}
    \text{UL:} \quad L(\mathbf{w}_g, D_f) &= \frac{1}{n_f}\|\mathbf{X}_f^{\top}\mathbf{w}_g -\mathbf{y}_f\|^2 = \frac{1}{n_f}\|\mathbf{X}_f^{\top}\mathbf{P}_r\mathbf{w}_*^r -\mathbf{X}_f^{\top}\mathbf{w}_*^f\|^2 \\
    &=\frac{1}{n_f} \left\| \left[ \mathbf{0},  \mathbf{F}^{\top}\right]\left[\begin{array}{cc}
    \mathbf{R} (\mathbf{R}^{\top}\mathbf{R} )^{-1}\mathbf{R}^{\top} & \mathbf{0} \\
    \mathbf{0} & \mathbf{0}
    \end{array} \right]\mathbf{w}_*^r -\mathbf{X}_f^{\top}\mathbf{w}_*^f\right\|^2 = \frac{1}{n_f}\| \mathbf{X}_f^{\top}\mathbf{w}_*^f\|^2.
\end{align*}
Similarly, for the fine-tuning model, it holds that 
\begin{align*}
    \text{RL:} \quad L(\mathbf{w}_t, D_r) &= \frac{1}{n_r}\|\mathbf{X}_r^{\top}\mathbf{w}_t -\mathbf{y}_r\|^2 = \frac{1}{n_r}\|\mathbf{X}_r^{\top}((\mathbf{I}-\mathbf{P}_t) \mathbf{w}_o+\mathbf{P}_t \mathbf{w}_*^r) -\mathbf{X}_r^{\top}\mathbf{w}_*^r\|^2 \\
    &= \frac{1}{n_r}\|\mathbf{X}_r^{\top}((\mathbf{I}-\mathbf{P}_t) \mathbf{P}(\mathbf{w}_*^r + \mathbf{w}_*^f)+\mathbf{P}_t \mathbf{w}_*^r) -\mathbf{X}_r^{\top}\mathbf{w}_*^r\|^2\\
    &= \frac{1}{n_r}\|\mathbf{X}_r^{\top}( \mathbf{P}\mathbf{w}_*^r + (\mathbf{P} - \mathbf{P}_t) \mathbf{w}_*^f) -\mathbf{X}_r^{\top}\mathbf{w}_*^r\|^2\\    
    & = 0.
\end{align*}
\begin{align*}
    \text{UL:} \quad L(\mathbf{w}_t, D_f) &= \frac{1}{n_f}\|\mathbf{X}_f^{\top}\mathbf{w}_t -\mathbf{y}_f\|^2 = \frac{1}{n_f}\|\mathbf{X}_f^{\top}((\mathbf{I}-\mathbf{P}_t) \mathbf{w}_o+\mathbf{P}_t \mathbf{w}_*^r) -\mathbf{X}_f^{\top}\mathbf{w}_*^f\|^2 \\
    &= \frac{1}{n_f}\|\mathbf{X}_f^{\top}[(\mathbf{I}-\mathbf{P}_t)\mathbf{P}\mathbf{w}_* +\mathbf{P}_t\mathbf{w}_*^r - \mathbf{w}_*^f]\|^2\\
    &= \frac{1}{n_f}\|\mathbf{X}_f^{\top}[(\mathbf{I}-\mathbf{P}_t)\mathbf{P} +\mathbf{P}_t]\mathbf{w}_*^r +\mathbf{X}_f^{\top}[(\mathbf{I}-\mathbf{P}_t)\mathbf{P} -\mathbf{I}] \mathbf{w}_*^f]\|^2 \\
    & = \frac{1}{n_f} \|\mathbf{X}_f^{\top}\mathbf{P}\mathbf{w}_*^r + \mathbf{X}_f^{\top}\mathbf{P}\mathbf{w}_*^f -\mathbf{X}_f^{\top}\mathbf{w}_*^f]\|^2 \\
    & = 0,
\end{align*}
where the penultimate equality comes from $\mathbf{X}_f^{\top}\mathbf{P}_t =\mathbf{X}_f^{\top}\mathbf{P}_r = 0$, and the last equality follows from $\mathbf{X}_f^{\top}\mathbf{P} =\mathbf{X}_f^{\top}$. 

\subsection{Proof of \cref{thm:overlap}}\label{sec:proof_of_overlap}
Due to the assumption of overlapping features, the projection properties of the dataset matrix will be slightly different. Specifically, it holds that:
\begin{corollary}[Projection Matrix properties$^{\prime}$]\label{pro:proj_matrix2}
    Let $\mathbf{P}= \mathbf{X} (\mathbf{X}^{\top}\mathbf{X} )^{-1}\mathbf{X}^{\top}, \mathbf{P}_r, \mathbf{P}_f, \mathbf{P}_t$ be the corresponding projection operator for $\mathbf{X}, \mathbf{X}_r, \mathbf{X}_f, \mathbf{X}_t$ respectively. Under \cref{asm:overlap}, it holds that:
    \begin{itemize}
                    
                    
                    
        \item [2.] $\mathbf{P}_r = \left[\begin{array}{ccc}
                    \mathbf{R} (\mathbf{R}^{\top}\mathbf{R}+\mathbf{L}_1^{\top}\mathbf{L}_1 )^{-1}\mathbf{R}^{\top} & \mathbf{R} (\mathbf{R}^{\top}\mathbf{R}+\mathbf{L}_1^{\top}\mathbf{L}_1 )^{-1}\mathbf{L}_1^{\top} & \mathbf{0}\\
                    \mathbf{L}_1 (\mathbf{R}^{\top}\mathbf{R}+\mathbf{L}_1^{\top}\mathbf{L}_1 )^{-1}\mathbf{R}^{\top} & \mathbf{L}_1 (\mathbf{R}^{\top}\mathbf{R}+\mathbf{L}_1^{\top}\mathbf{L}_1 )^{-1}\mathbf{L}_1^{\top} & \mathbf{0} \\
                     \mathbf{0}& \mathbf{0}& \mathbf{0}
                    \end{array} \right]$ ;
                    
        \item [3.] $\mathbf{P}_f = \left[\begin{array}{ccc}
                    \mathbf{0} & \mathbf{0} & \mathbf{0}\\
                    \mathbf{0} & \mathbf{L}_2 (\mathbf{F}^{\top}\mathbf{F}+\mathbf{L}_2^{\top}\mathbf{L}_2 )^{-1}\mathbf{L}_2^{\top} & \mathbf{L}_2 (\mathbf{F}^{\top}\mathbf{F}+\mathbf{L}_2^{\top}\mathbf{L}_2 )^{-1}\mathbf{F}^{\top}\\
                     \mathbf{0}& \mathbf{F} (\mathbf{F}^{\top}\mathbf{F}+\mathbf{L}_2^{\top}\mathbf{L}_2 )^{-1}\mathbf{L}_2^{\top} & \mathbf{F} (\mathbf{F}^{\top}\mathbf{F}+\mathbf{L}_2^{\top}\mathbf{L}_2 )^{-1}\mathbf{F}^{\top}
                    \end{array} \right]$ ;
        \item [3.] $\mathbf{X}(\mathbf{I}-\mathbf{P}) = (\mathbf{I}-\mathbf{P})\mathbf{X} = 0$, and the conclusion also holds for $\mathbf{P}_r, \mathbf{P}_f, \mathbf{P}_t$ with $\mathbf{X}_r, \mathbf{X}_f, \mathbf{X}_t$ respectively;
        \item [4.] For any matrix $\mathbf{A}$ is the submatrix of $\mathbf{X}$, it holds that $\mathbf{A} = \mathbf{PA}$, where $\mathbf{P}$ is the projection space of $\mathbf{X}$. Moreover, if $\mathbf{P}_A$ is the projection space of $\mathbf{A}$, it holds that $\mathbf{P}\mathbf{P}_A = \mathbf{P}_A$.
    \end{itemize}
\end{corollary}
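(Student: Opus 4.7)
My plan is to prove \cref{pro:proj_matrix2} by direct block computation for the two explicit projection formulas, and then to recover the remaining identities from standard projection algebra, mirroring the strategy already used in the proof of \cref{pro:proj_matrix}. The claims split naturally into two parts: (a) the $3\times 3$ block forms of $\mathbf{P}_r$ and $\mathbf{P}_f$ that must be verified by multiplying out the definitions, and (b) the generic identities $(\mathbf{I}-\mathbf{P})\mathbf{X}=0$ and $\mathbf{P}\mathbf{A}=\mathbf{A}$, $\mathbf{P}\mathbf{P}_A=\mathbf{P}_A$, which are true for any projection onto the column span of a matrix and for any submatrix $\mathbf{A}$ of that matrix.

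The first step is to compute $\mathbf{P}_r$. Under \cref{asm:overlap} we have $\mathbf{X}_r^\top=[\mathbf{R}^\top,\mathbf{L}_1^\top,\mathbf{0}]$, so the Gram matrix collapses to $\mathbf{X}_r^\top \mathbf{X}_r = \mathbf{R}^\top \mathbf{R} + \mathbf{L}_1^\top \mathbf{L}_1$ because the third block contributes nothing. Substituting this into $\mathbf{P}_r = \mathbf{X}_r(\mathbf{X}_r^\top \mathbf{X}_r)^{-1}\mathbf{X}_r^\top$ and expanding the outer product block by block produces exactly the $3\times 3$ block matrix listed in the statement, with the third row and third column identically zero because the third row of $\mathbf{X}_r$ is zero. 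The derivation of $\mathbf{P}_f$ is identical with the roles reversed: $\mathbf{X}_f^\top \mathbf{X}_f = \mathbf{F}^\top \mathbf{F} + \mathbf{L}_2^\top \mathbf{L}_2$, and this time the first row and column of the resulting block matrix vanish because the first row of $\mathbf{X}_f$ is zero. These are just routine matrix multiplications which I would write out in the final proof for completeness.

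For the identity $(\mathbf{I}-\mathbf{P})\mathbf{X}=0$, a one-line calculation suffices: $\mathbf{P}\mathbf{X} = \mathbf{X}(\mathbf{X}^\top \mathbf{X})^{-1}(\mathbf{X}^\top \mathbf{X}) = \mathbf{X}$, and the transpose form $\mathbf{X}^\top(\mathbf{I}-\mathbf{P})=0$ follows by using $\mathbf{P}=\mathbf{P}^\top$ from \cref{pro:proj_pro}. The very same computation, with $(\mathbf{X},\mathbf{P})$ replaced successively by $(\mathbf{X}_r,\mathbf{P}_r)$, $(\mathbf{X}_f,\mathbf{P}_f)$, and $(\mathbf{X}_t,\mathbf{P}_t)$, establishes the other stated cases. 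For Item 4, the key observation is that any submatrix $\mathbf{A}$ of $\mathbf{X}$ can be written as $\mathbf{A}=\mathbf{X}\mathbf{C}$ for a column-selection matrix $\mathbf{C}$, so $\mathbf{P}\mathbf{A} = \mathbf{X}(\mathbf{X}^\top\mathbf{X})^{-1}\mathbf{X}^\top\mathbf{X}\mathbf{C} = \mathbf{X}\mathbf{C} = \mathbf{A}$, and then $\mathbf{P}\mathbf{P}_A = \mathbf{P}\mathbf{A}(\mathbf{A}^\top\mathbf{A})^{-1}\mathbf{A}^\top = \mathbf{A}(\mathbf{A}^\top\mathbf{A})^{-1}\mathbf{A}^\top = \mathbf{P}_A$.

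I do not anticipate any hard step: the whole corollary reduces to block-matrix bookkeeping together with two well-known projection identities. The one thing worth flagging is that invertibility of $\mathbf{X}_r^\top\mathbf{X}_r$, $\mathbf{X}_f^\top\mathbf{X}_f$, and $\mathbf{X}_t^\top\mathbf{X}_t$ must hold for $\mathbf{P}_r,\mathbf{P}_f,\mathbf{P}_t$ to be defined as written; this amounts to assuming that the stacked matrices $\begin{bmatrix}\mathbf{R}\\\mathbf{L}_1\end{bmatrix}$, $\begin{bmatrix}\mathbf{L}_2\\\mathbf{F}\end{bmatrix}$, and the fine-tuning feature matrix have full column rank, which is consistent with the overparameterized, well-posed regime already in force throughout the paper. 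One structural contrast with \cref{pro:proj_matrix} that I would highlight in the write-up is that here $\mathbf{P}\neq \mathbf{P}_r+\mathbf{P}_f$, because the shared middle coordinates couple the two projections; this is precisely why the corollary lists the blocks of $\mathbf{P}_r$ and $\mathbf{P}_f$ individually rather than as a direct-sum decomposition as in the distinct-features case.
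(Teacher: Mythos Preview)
Your proposal is correct and matches the paper's approach exactly: the paper's proof of \cref{pro:proj_matrix2} simply states that it ``follows the proof of \cref{pro:proj_matrix} directly,'' and your plan is precisely to mirror that earlier argument---block-multiply out $\mathbf{X}_r(\mathbf{X}_r^\top\mathbf{X}_r)^{-1}\mathbf{X}_r^\top$ and $\mathbf{X}_f(\mathbf{X}_f^\top\mathbf{X}_f)^{-1}\mathbf{X}_f^\top$ using the overlap structure, then reuse the selection-matrix argument for Items 3 and 4. Your additional remark that $\mathbf{P}\neq\mathbf{P}_r+\mathbf{P}_f$ under overlapping features is a nice clarification the paper does not make explicit.
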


\begin{proof}[\bf Proof of \cref{pro:proj_matrix2}]
    Proof of \cref{pro:proj_matrix2} follows the proof of \cref{pro:proj_matrix} directly.
\end{proof}

Now we are ready to go through the proof of \cref{thm:overlap}.
Similar to the non-overlapping case, the golden model holds that
\begin{align*}
    \text{RL:} \quad L(\mathbf{w}_g, D_r) &= \frac{1}{n_r}\|\mathbf{X}_r^{\top}\mathbf{w}_g -\mathbf{y}_r\|^2 = \frac{1}{n_r}\|\mathbf{X}_r^{\top}\mathbf{P}_r(\mathbf{w}_*^r + \mathbf{w}_*^{lap}) -\mathbf{X}_r^{\top}(\mathbf{w}_*^r + \mathbf{w}_*^{lap})\|^2 \\
    &= \frac{1}{n_r} \|\mathbf{X}_r^{\top}(\mathbf{P}_r - \mathbf{I})(\mathbf{w}_*^r + \mathbf{w}_*^{lap})\|^2 = 0,
\end{align*}
where the second equality also arises from the model setting and Proposition \ref{pro:min_norm_solu}, while the penultimate equality is due to the properties of the projection matrix. According to \cref{pro:proj_matrix2}, we have
\begin{align*}
    \text{UL:} \quad L(\mathbf{w}_g, D_f) &= \frac{1}{n_f}\|\mathbf{X}_f^{\top}\mathbf{w}_g -\mathbf{y}_f\|^2 = \frac{1}{n_f}\|\mathbf{X}_f^{\top}\mathbf{P}_r(\mathbf{w}_*^r + \mathbf{w}_*^{lap}) -\mathbf{X}_f^{\top}(\mathbf{w}_*^f + \mathbf{w}_*^{lap})\|^2 
\end{align*}
where $\mathbf{X}_f^{\top}\mathbf{P}_r\mathbf{w}_*^r$ and  $\mathbf{X}_f^{\top}\mathbf{P}_r\mathbf{w}_*^{lap}$ follows that
\begin{align*}
    \mathbf{X}_f^{\top}\mathbf{P}_r\mathbf{w}_*^r & = [\mathbf{0},  \mathbf{L}_2^{\top}, \mathbf{F}^{\top}] \left[\begin{array}{ccc}
                    \mathbf{R} (\mathbf{R}^{\top}\mathbf{R}+\mathbf{L}_1^{\top}\mathbf{L}_1 )^{-1}\mathbf{R}^{\top} & \mathbf{R} (\mathbf{R}^{\top}\mathbf{R}+\mathbf{L}_1^{\top}\mathbf{L}_1 )^{-1}\mathbf{L}_1^{\top} & \mathbf{0}\\
                    \mathbf{L}_1 (\mathbf{R}^{\top}\mathbf{R}+\mathbf{L}_1^{\top}\mathbf{L}_1 )^{-1}\mathbf{R}^{\top} & \mathbf{L}_1 (\mathbf{R}^{\top}\mathbf{R}+\mathbf{L}_1^{\top}\mathbf{L}_1 )^{-1}\mathbf{L}_1^{\top} & \mathbf{0} \\
                     \mathbf{0}& \mathbf{0}& \mathbf{0}
                    \end{array} \right] \left[\begin{array}{c}\square \\
                    \mathbf{0} \\
                    \mathbf{0}
                    \end{array}\right]\\
    & = \mathbf{L}_2^{\top} \mathbf{L}_1(\mathbf{R}^{\top} \mathbf{R}+\mathbf{L}_1^{\top} \mathbf{L}_1)^{-1} \mathbf{R}^{\top} \mathbf{w}_*^r
\end{align*}
and 
\begin{align*}
    \mathbf{X}_f^{\top}\mathbf{P}_r\mathbf{w}_*^{lap} & = [\mathbf{0},  \mathbf{L}_2^{\top}, \mathbf{F}^{\top}] \left[\begin{array}{ccc}
                    \mathbf{R} (\mathbf{R}^{\top}\mathbf{R}+\mathbf{L}_1^{\top}\mathbf{L}_1 )^{-1}\mathbf{R}^{\top} & \mathbf{R} (\mathbf{R}^{\top}\mathbf{R}+\mathbf{L}_1^{\top}\mathbf{L}_1 )^{-1}\mathbf{L}_1^{\top} & \mathbf{0}\\
                    \mathbf{L}_1 (\mathbf{R}^{\top}\mathbf{R}+\mathbf{L}_1^{\top}\mathbf{L}_1 )^{-1}\mathbf{R}^{\top} & \mathbf{L}_1 (\mathbf{R}^{\top}\mathbf{R}+\mathbf{L}_1^{\top}\mathbf{L}_1 )^{-1}\mathbf{L}_1^{\top} & \mathbf{0} \\
                     \mathbf{0}& \mathbf{0}& \mathbf{0}
                    \end{array} \right] \left[\begin{array}{c} \mathbf{0}\\
                    \square\\
                    \mathbf{0}
                    \end{array}\right]\\
    & = \mathbf{L}_2^{\top} \mathbf{L}_1(\mathbf{R}^{\top} \mathbf{R}+\mathbf{L}_1^{\top} \mathbf{L}_1)^{-1} \mathbf{L}_1^{\top} \mathbf{w}_*^{lap}.
\end{align*}

For the fine-tuning, the retaining accuracy follows:
\begin{align*}
    \text{RL:} \quad L(\mathbf{w}_t, D_r) &=\frac{1}{n_r} \|\mathbf{X}_r^{\top}\mathbf{w}_t -\mathbf{y}_r\|^2 \\
    &= \frac{1}{n_r}\|\mathbf{X}_r^{\top}((\mathbf{I}-\mathbf{P}_t) \mathbf{w}_o+\mathbf{P}_t (\mathbf{w}_*^r + \mathbf{w}_*^{lap})) -\mathbf{X}_r^{\top}(\mathbf{w}_*^r + \mathbf{w}_*^{lap})\|^2 \\
    &= \frac{1}{n_r}\|\mathbf{X}_r^{\top}(\mathbf{I}-\mathbf{P}_t)(\mathbf{w}_o -\mathbf{w}_*^r-\mathbf{w}_*^{lap})\|^2\\
    & = \frac{1}{n_r}\|\mathbf{X}_r^{\top}(\mathbf{I}-\mathbf{P}_t)(\mathbf{P} -\mathbf{I})( \mathbf{w}_*^r+\mathbf{w}_*^{lap})\|^2 \\
    &= 0.
\end{align*}
The last equality derives from that the facts the projection matrix is commutative matrix and the last property holds in \cref{pro:proj_matrix2}.
For the unlearning accuracy, it holds that 
\begin{align*}
    \text{UL:} \quad L(&\mathbf{w}_t, D_f) = \frac{1}{n_f}\|\mathbf{X}_f^{\top}\mathbf{w}_t -\mathbf{y}_f\|^2 \\
    &=  \frac{1}{n_f}\|\mathbf{X}_f^{\top}((\mathbf{I}-\mathbf{P}_t) \mathbf{w}_o+\mathbf{P}_t (\mathbf{w}_*^r+\mathbf{w}_*^{lap})) -\mathbf{X}_f^{\top}(\mathbf{w}_*^f+ \mathbf{w}_*^{lap})\|^2 \\
    &=  \frac{1}{n_f}\|\mathbf{X}_f^{\top}((\mathbf{I}-\mathbf{P}_t) (\mathbf{P}(\mathbf{w}_*^r+\mathbf{w}_*^{lap} + \mathbf{w}_*^f)  ) +\mathbf{P}_t (\mathbf{w}_*^r+\mathbf{w}_*^{lap})) -\mathbf{X}_f^{\top}(\mathbf{w}_*^f+ \mathbf{w}_*^{lap})\|^2\\
    & =  \frac{1}{n_f}\|\mathbf{X}_f^{\top}((\mathbf{P}-\mathbf{P}_t) (\mathbf{w}_*^r+\mathbf{w}_*^{lap} + \mathbf{w}_*^f)  )+\mathbf{P}_t (\mathbf{w}_*^r+\mathbf{w}_*^{lap})) -\mathbf{X}_f^{\top}(\mathbf{w}_*^f+ \mathbf{w}_*^{lap})\|^2\\
    & =  \frac{1}{n_f}\|\mathbf{X}_f^{\top}[(\mathbf{P}-\mathbf{I}) \mathbf{w}_*^{lap} +\mathbf{P}\mathbf{w}_*^r + (\mathbf{P}-\mathbf{I}-\mathbf{P}_t)\mathbf{w}_*^f]\|^2\\
    & =  \frac{1}{n_f}\|\mathbf{X}_f^{\top}\mathbf{P}_t\mathbf{w}_*^f]\|^2\\
    & = 0,
\end{align*}
where the penultimate equality is due to \cref{pro:proj_matrix2} and the last equality comes from the fact $\mathbf{X}_t$ enjoys the same data structure as $\mathbf{X}_t$ such that:
\begin{align*}
    &\mathbf{X}_f^{\top}\mathbf{P}_t\mathbf{w}_*^f \\
    &= [\mathbf{0},  \mathbf{L}_2^{\top}, \mathbf{F}^{\top}] \left[\begin{array}{ccc}
                    \mathbf{R}_T (\mathbf{R}_T^{\top}\mathbf{R}_T+{\mathbf{L}_1}_T^{\top}{\mathbf{L}_1}_T )^{-1}\mathbf{R}_T^{\top} & \mathbf{R}_T (\mathbf{R}_T^{\top}\mathbf{R}_T+{\mathbf{L}_1}_T^{\top}{\mathbf{L}_1}_T )^{-1}{\mathbf{L}_1}_T^{\top} & \mathbf{0}\\
                    {\mathbf{L}_1}_T (\mathbf{R}_T^{\top}\mathbf{R}_T+{\mathbf{L}_1}_T^{\top}{\mathbf{L}_1}_T )^{-1}\mathbf{R}_T^{\top} & {\mathbf{L}_1}_T (\mathbf{R}_T^{\top}\mathbf{R}_T+{\mathbf{L}_1}_T^{\top}{\mathbf{L}_1}_T )^{-1}{\mathbf{L}_1}_T^{\top} & \mathbf{0} \\
                     \mathbf{0}& \mathbf{0}& \mathbf{0}
                    \end{array} \right] \left[\begin{array}{c} \mathbf{0}\\
                    \mathbf{0}\\
                    \square
                    \end{array}\right]\\
    & = [\mathbf{L}_2^{\top} {\mathbf{L}_1}_T(\mathbf{R}_T^{\top} \mathbf{R}_T+{\mathbf{L}_1}_T^{\top} {\mathbf{L}_1}_T)^{-1} {\mathbf{R}}_T^{\top} ,  \mathbf{L}_2^{\top} {\mathbf{L}_1}_T(\mathbf{R}_T^{\top} \mathbf{R}_T+{\mathbf{L}_1}_T^{\top} {\mathbf{L}_1}_T)^{-1} {\mathbf{L}_1}_T^{\top} , 0] \left[\begin{array}{c} \mathbf{0}\\
                    \mathbf{0}\\
                    \square
                    \end{array}\right]\\
    & =0.
\end{align*}

\subsection{Proof of \cref{thm:masked}}\label{sec:proof_of_masked}
For the non-overlapping case, we have that the retaining accuracy follows:
\begin{align*}
    \text{RL:} \quad L(\mathbf{w}_t, D_r) &=  \frac{1}{n_r}\|\mathbf{X}_r^{\top}\mathbf{w}_t -\mathbf{y}_r\|^2 = \frac{1}{n_r}\|\mathbf{X}_r^{\top}((\mathbf{I}-\mathbf{P}_t) \hat{\mathbf{w}}_o+\mathbf{P}_t \mathbf{w}_*^r ) -\mathbf{X}_r^{\top}\mathbf{w}_*^r \|^2 \\
    &= \frac{1}{n_r}\|\mathbf{X}_r^{\top}(\mathbf{I}-\mathbf{P}_t)(\hat{\mathbf{w}}_o -\mathbf{w}_*^r)\|^2\\
    &=\frac{1}{n_r} \|\mathbf{X}_r^{\top}(\mathbf{I}-\mathbf{P}_t)({\mathbf{P}} -\mathbf{I})\mathbf{w}_*^r\|^2=0.
\end{align*}
For the unlearning accuracy, it holds that 
\begin{align*}
    \text{UL:} \quad L(\mathbf{w}_t, D_f) &= \frac{1}{n_f}\|\mathbf{X}_f^{\top}\mathbf{w}_t -\mathbf{y}_f\|^2 = \frac{1}{n_f}\|\mathbf{X}_f^{\top}((\mathbf{I}-\mathbf{P}_t) \hat{\mathbf{w}_o}+\mathbf{P}_t \mathbf{w}_*^r) -\mathbf{X}_f^{\top}\mathbf{w}_*^f\|^2 \\
    &= \frac{1}{n_f}\|\mathbf{X}_f^{\top}[(\mathbf{I}-\mathbf{P}_t)\mathbf{P}\mathbf{w}_*^r +\mathbf{P}_t\mathbf{w}_*^r - \mathbf{w}_*^f]\|^2\\
    &= \frac{1}{n_f}\|\mathbf{X}_f^{\top}[(\mathbf{I}-\mathbf{P}_t)\mathbf{P} +\mathbf{P}_t]\mathbf{w}_*^r -\mathbf{X}_f^{\top}\mathbf{w}_*^f]\|^2 \\
    & = \frac{1}{n_f} \|\mathbf{X}_f^{\top}\mathbf{P}\mathbf{w}_*^r  -\mathbf{X}_f^{\top}\mathbf{w}_*^f]\|^2 \\
    & = \frac{1}{n_f} \|\mathbf{w}_*^f]\|_{\mathbf{X}_f\mathbf{X}_f^{\top}}^2,
\end{align*}

For the overlapping case, it holds that
\begin{align*}
    \text{RL:} \quad L(\mathbf{w}_t, D_r) &= \frac{1}{n_r}\|\mathbf{X}_r^{\top}\mathbf{w}_t -\mathbf{y}_r\|^2 \\
    &= \frac{1}{n_r}\|\mathbf{X}_r^{\top}((\mathbf{I}-\mathbf{P}_t) \hat{\mathbf{w}}_o+\mathbf{P}_t (\mathbf{w}_*^r + \mathbf{w}_*^{lap})) -\mathbf{X}_r^{\top}(\mathbf{w}_*^r + \mathbf{w}_*^{lap})\|^2 \\
    &= \frac{1}{n_r}\|\mathbf{X}_r^{\top}(\mathbf{I}-\mathbf{P}_t)(\hat{\mathbf{w}}_o -\mathbf{w}_*^r-\mathbf{w}_*^{lap})\|^2\\
    &=\frac{1}{n_r} \|\mathbf{X}_r^{\top}(\mathbf{I}-\mathbf{P}_t)({\mathbf{P}}-\mathbf{I} )(\mathbf{w}_*^r-\mathbf{w}_*^{lap})\|^2=0.
\end{align*}
\begin{align*}
    \text{UL:} \quad L(\mathbf{w}_t, D_f) &= \frac{1}{n_f}\|\mathbf{X}_f^{\top}\mathbf{w}_t -\mathbf{y}_f\|^2 \\
    &= \frac{1}{n_f}\|\mathbf{X}_f^{\top}((\mathbf{I}-\mathbf{P}_t) \hat{\mathbf{w}}_o+\mathbf{P}_t (\mathbf{w}_*^r+\mathbf{w}_*^{lap})) -\mathbf{X}_f^{\top}(\mathbf{w}_*^f+ \mathbf{w}_*^{lap})\|^2 \\
    &= \frac{1}{n_f}\|\mathbf{X}_f^{\top}((\mathbf{I}-\mathbf{P}_t) (\mathbf{P}(\mathbf{w}_*^r+\mathbf{w}_*^{lap} )  ) +\mathbf{P}_t (\mathbf{w}_*^r+\mathbf{w}_*^{lap})) -\mathbf{X}_f^{\top}(\mathbf{w}_*^f+ \mathbf{w}_*^{lap})\|^2\\
    & =\frac{1}{n_f} \|\mathbf{X}_f^{\top}((\mathbf{P}-\mathbf{P}_t) (\mathbf{w}_*^r+\mathbf{w}_*^{lap})  )+\mathbf{P}_t (\mathbf{w}_*^r+\mathbf{w}_*^{lap})) -\mathbf{X}_f^{\top}(\mathbf{w}_*^f+ \mathbf{w}_*^{lap})\|^2\\
    & = \frac{1}{n_f}\|\mathbf{X}_f^{\top}[(\mathbf{P}-\mathbf{I}) \mathbf{w}_*^{lap} +\mathbf{P}\mathbf{w}_*^r - \mathbf{w}_*^f]\|^2\\
    & =\frac{1}{n_f} \|\mathbf{P}(\mathbf{w}_*^r + \mathbf{w}_*^{lap}) -(\mathbf{w}_*^f + \mathbf{w}_*^{lap})\|_{\mathbf{X}_f\mathbf{X}_f^{\top}}^2.
\end{align*}
The proof is then complete.

\end{document}